\newcommand{\startfoo}{%
    \par\medskip
    \begin{mdframed}[linewidth=1pt]%
    \let\figure\figurehere
    \let\endfigure\endfigurehere
    \ignorespaces
}
\newcommand{\stopfoo}{%
    \unskip
    \end{mdframed}%
    \par\medskip
}
\theoremstyle{plain}
\newtheorem{theorem}{Theorem}
\newtheorem{lemma}[theorem]{Lemma}
\newtheorem{corollary}[theorem]{Corollary}
\newtheorem*{theorem*}{Theorem}
\newtheorem*{lemma*}{Lemma}
\newtheorem*{corollary*}{Corollary}
\newtheorem*{proposition*}{Proposition}
\newtheorem*{claim*}{Claim}
\newtheorem*{fact*}{Fact}
\theoremstyle{definition}
\newtheorem*{definition*}{Definition}
\newtheorem*{remark*}{Remark}
\newtheorem*{example*}{Example}
\newcommand{\ignore}[1]{}
\DeclareMathOperator*{\argmin}{arg\,min}
\newcommand{\be}{\begin{align}}
\newcommand{\en}{\end{align}}
\newcommand{\ben}{\begin{align*}}
\newcommand{\enn}{\end{align*}}
\newcommand{\tr}{^{\top}}
\newcommand{\st}{\star}
\renewcommand{\t}[1]{\tilde{#1}}
\newcommand{\abs}[1]{\left|#1\right|}
\newcommand{\norm}[1]{\left\|#1\right\|}
\newcommand{\lr}[1]{\left(#1\right)}
\newcommand{\set}[1]{\left\{#1\right\}}
\newcommand{\reals}{\mathbb{R}}
\newcommand{\D}{\mathcal{D}}
\newcommand{\K}{\mathcal{K}}
\newcommand{\eps}{\varepsilon}
\renewcommand{\t}[1]{\tilde{#1}}
\newcommand{\wt}[1]{\smash{\widetilde{#1}}}
\renewcommand{\O}{O}
\newcommand{\tO}{\wt{\O}}
\newcommand{\E}{\mathbf{E}}
\newcommand{\KL}{\mathrm{KL}}
\newcommand{\half}{\frac{1}{2}}
\newcommand{\ind}[1]{1\!\!1_{#1}}
\newcommand{\ds}{\displaystyle}
\newcommand{\Blr}[1]{\Big(#1\Big)}
\newcommand{\wbar}{\overline{w}}
\newcommand{\A}{\mathcal{A}}
\newcommand{\W}{\mathcal{W}}
\renewcommand{\eps}{\epsilon}
\renewcommand{\O}{O}
\renewcommand{\st}{*}
\newcommand{\poly}{\text{poly}}
\title{Logistic Regression: Tight Bounds \\ for Stochastic and Online Optimization%
\footnote{The research leading to these results has received funding from the European Union's Seventh Framework Programme (FP7/2007-2013) under grant agreement n$^{\circ}$ 336078 -- ERC-SUBLRN.}}
\author{%
Elad Hazan\footnote{Technion---Israel Institute of Technology, Haifa 32000, Israel. 
Emails: \texttt{ehazan@ie.technion.ac.il}, \texttt{tomerk@technion.ac.il}, \texttt{kfiryl@tx.technion.ac.il}.}
\and
Tomer Koren\footnotemark[2]
\and 
Kfir Y. Levy\footnotemark[2]
}
\date{May 2014}                                           
\begin{document}
\maketitle

\begin{abstract}
The logistic loss function is often advocated in machine learning and statistics as a smooth and strictly convex surrogate for the 0-1 loss. In this paper we investigate the question of whether these smoothness and convexity properties make the logistic loss  preferable to other widely considered options such as the hinge loss. 
We show that in contrast to known asymptotic bounds, as long as the number of prediction/optimization iterations is sub exponential, the logistic loss provides no improvement over a generic non-smooth  loss function  such as the hinge loss. 
In particular we show that the convergence rate of stochastic logistic optimization is bounded from below by a polynomial in the diameter of the decision set and the number of prediction iterations, and provide a matching tight upper bound. This resolves  the COLT open problem of \cite{mcmahan2012open}.
\end{abstract}


\section{Introduction}
In many applications, such as estimation of click-through-rate in web advertising, and predicting whether a patient has a certain disease, the logistic loss is often the loss of choice. 
It appeals as a convex surrogate of the 0-1 loss, and as a tool that not only yields categorical prediction but also able to estimate the underlying probabilities of the categories. 
Moreover, \cite{friedman2000additive} and \cite{collins2002logistic} have shown that logistic regression is  strongly connected to boosting.

A long standing debate in the machine learning community has been the optimal choice of surrogate loss function for binary prediction problems (see \cite{BlogLangford}, \cite{BlogBulatov}). Amongst the arguments in support of the logistic loss are its smoothness and strict-convexity properties, which unlike other loss functions (such as the hinge loss), permit the use of more efficient optimization methods. In particular, the logistic loss is exp-concave, and thus second-order methods are applicable and give rise to theoretically superior convergence and/or regret bounds.

More technically, under standard assumptions on the training data, the logistic loss is 1-Lipschitz and $e^{-D}$-exp-concave over the set of linear $n$-dimensional classifiers whose $L_{2}$-norm is at most~$D$.
Thus, the Online Newton Step algorithm \citep{HazanAK07} can be applied to the logistic regression problem and gives a convergence rate of $\tO(e^{D} n/T)$ over $T$ iterations.
On the other hand, first order methods can be used to attain a rate of $\O(D/\sqrt{T})$, which is attainable in general for any Lipschitz convex loss function. 
The exponential dependence on~$D$ of the first bound suggests that second order methods might present poor performance in practical logistic regression problems, even when compared to the slow $1/\sqrt{T}$ rate of first-order methods.
The gap between the two rates raises the question: {\bf is a fast convergence rate of the form $\wt{\bm{O}}\bm{(\poly(D)/T)}$ achievable for logistic regression?}

This question has received much attention lately. 
\cite{bach2013adaptivity}, relying on a property called ``generalized self-concordance'', gave an algorithm with convergence rate of $\O(D^{4}/\mu^{*} T)$, where $\mu^*$ is the smallest eigenvalue of the Hessian at the optimal point.
This translates to a~$\O(\poly(D)/T)$ rate whenever the expected loss function is ``locally strongly convex'' at the optimum.
More recently, \cite{bach2013nonstrong} extended this result and presented an elegant algorithm that attains a rate of the form~$\O(\rho^{3} D^{4} n/T)$,
without assuming strong convexity (neither global or local) ---  but rather depending on a certain data-dependent constant~$\rho$.

In this paper, we resolve the above question and give tight characterization of the achievable convergence rates for logistic regression. 
We show that as long as the target accuracy $\eps$ is not exponentially small in $D$, a rate of the form $\tO(\poly(D)/T)$ is not attainable.
Specifically, we prove a lower bound of $\Omega(\sqrt{D/T})$ on the convergence rate, that can also be achieved (up to a $\sqrt{D}$ factor) by stochastic gradient descent algorithms.
In particular, this shows that in the worst case, the magnitude of data-dependent parameters used in previous works are exponentially large in the diameter $D$.
The latter lower bound only applies for multi-dimensional regression (i.e., when $n \ge 2$);
surprisingly, in one-dimensional logistic regression we find a rate of $\Theta(T^{-2/3})$ to be tight.
As far as we know, this is the first natural setting demonstrating such a phase transition in the optimal convergence rates, with respect to the dimensionality of the problem.
\setlength{\cellspacetoplimit}{3pt}
\setlength{\cellspacebottomlimit}{3pt}
\begin{table}[h!] \label{state}
\begin{center}
\begin{tabular}{ |l||Sc @{} Sl|Sc @{} Sl|Sc @{} Sl| }
\hline
\multirow{2}{*}{Setting} & \multicolumn{2}{c|}{\multirow{2}{*}{Previous}} & \multicolumn{4}{c|}{This Paper}\\ 
\cline{4-7}
& & & \multicolumn{2}{c|}{$n=1$} & \multicolumn{2}{c|}{$n \ge 2$} \\
\hline\hline
Stochastic & $\ds\O\Blr{\frac{D}{\sqrt{T}}}$~&~[\citeauthor{Zinkevich03}] & $\ds\O\Blr{\frac{D^3}{T^{2/3}}}$~&~[Cor.~\ref{Corollary:LogitFTRL}] & $\ds\Omega\Blr{\sqrt{\frac{D}{T}}}$~&~[Thm.~\ref{Theorem:2DimLowerBound}] \\
& $\ds\O\lr{\frac{e^D \log{T}}{T}}$~&~[\citeauthor{HazanAK07}] & $\ds\Omega\Blr{\frac{D^{2/3}}{T^{2/3}}}$~&~[Thm.~\ref{Theorem:1DimLowerBound}] & & \\
\hline
Online & $\ds\O(D\sqrt{T})$~&~[\citeauthor{Zinkevich03}] & $\ds\O(D^3 \, T^{1/3})$~&~[Thm.~\ref{Theorem:LogitFTRL}] & $\ds\Omega(\sqrt{DT})$~&~[Cor.~\ref{Corollary:2DimLowerBound}] \\
&  $\ds\O(e^D \log T)$~&~[\citeauthor{HazanAK07}] & $\ds\Omega(D^{2/3}\,T^{1/3})$~&~[Cor.~\ref{Corollary:1DimLowerBound}] & & \\
\hline
\end{tabular}
\end{center}
\caption{Convergence rates and regret bounds for the logistic loss, in the regime $T =  \O(e^D)$. } 
\end{table}

We also consider the closely-related online optimization setting, where on each round~$t=1,2,\ldots,T$ an adversary chooses a certain logistic function and our goal is to minimize the $T$-round regret, with respect to the best fixed decision chosen with the benefit of hindsight.
In this setting, \cite{mcmahan2012open} investigated the one-dimensional case and showed that if the adversary is restricted to pick binary (i.e.~$\pm 1$) labels, a simple follow-the-leader algorithm attains a regret bound of $\O(\sqrt{D}+\log{T})$. 
This discovery led them to conjecture that bounds of the form $\O(\poly(D)\log{T})$ should be achievable in the general multi-dimensional case with continuous labels set.

Our results extend to the online optimization setup and resolve the COLT 2012 open problem of \cite{mcmahan2012open} on the negative side.
Namely, we show that as long as the number of rounds $T$ is not exponentially large in $D$, an upper bound of $\O(\poly(D)\log{T})$ cannot be attained in general.
We obtain lower bounds on the regret of $\Omega(\sqrt{D\,T})$ in the multi-dimensional case and~$\Omega(D^{2/3}T^{1/3})$ in the one-dimensional case, when allowing the adversary to use a continuous label set. We are not aware of any other natural problem that exhibits such a dichotomy between the minimax regret rates in the one-dimensional and multi-dimensional cases.

 It is interesting to note that our bounds apply to a finite interval of time, namely when $T = \O(e^D)$, which is arguably the regime of interest for reasonable values of $D$. This is the reason our lower bounds do not contradict the logarithmic known regret bounds.

We prove the tightness of our one-dimensional lower bounds, in both the stochastic and online settings, by devising an online optimization algorithm specialized for one-dimensional online logistic regression that attains a regret of $\O(D^{3} \, T^{1/3})$.
This algorithm maintains approximations of the observed logistic loss functions, and use these approximate losses to form the next prediction by a follow-the-regularized-leader (FTRL) procedure. 
As opposed to previous works that utilize approximate losses based on \emph{local} structure~\citep{Zinkevich03,HazanAK07}, we find it necessary to employ approximations that rely on the \emph{global} structure of the logistic loss.

The rest of the paper is organized as follows. 
In \cref{section:Preliminaries} we describe the settings we consider and give the necessary background. 
We present our lowers bounds in \cref{section:LowerBounds}, and in \cref{section:UpperBounds} we prove our upper bound for one dimensional logistic regression. 
In \cref{Section:Proofs} we give complete proofs of our results.
We conclude in \cref{sec:summary}. 

\section{Setting and Background} \label{section:Preliminaries}

In this section we formalize the settings of stochastic logistic regression and online logistic regression and give the necessary background on both problems.

\subsection{Stochastic Logistic Regression}
In the problem of stochastic logistic regression, there is an unknown distribution $\mathcal{D}$ over instances $x\in \reals^{n}$.
For simplicity, we assume that $\norm{x} \le 1$.
The goal of an optimization algorithm is to minimize the expected loss of a linear predictor $w \in \reals^{n}$, 
\begin{align} \label{eq:Loss}
	L(w) 
	~=~ \E_{x \sim \D}[\, \ell(w , x) \,] ~,
\end{align}
where $\ell$ is the logistic loss function%
\footnote{The logistic loss is commonly defined as $\ell(w; x,y) = \log \big( 1+\exp(-y x \cdot w) \big)$ for instances $(x,y) \in \reals^{n} \times [-1,1]$. For ease of notation and without loss of generality, we ignore the variable $y$ in the instance $(x,y)$ by absorbing  it into~$x$. 
},
\begin{align*}
	\ell(w, x) 
	~=~ \log \big( 1+\exp(x \cdot w) \big)
\end{align*}
that expresses the negative log-likelihood of the instance $x$ under the logit model.
While we may try to optimize $L(w)$ over the entire Euclidean space, for generalization purposes we usually restrict the optimization domain to some bounded set.
In this paper, we focus on optimizing the expected loss over the set $\W = \set{w \in \reals^{n} \,:\, \norm{w} \le D}$, the Euclidean ball of radius $D$.
We define the \emph{excess loss} of a linear predictor $w \in \W$ as the difference $L(w) - \min_{w^{\st} \in \W} L(w^{\st})$ between the expected loss of $w$ and the expected loss of the best predictor in the class $\W$.

An algorithm for the stochastic optimization problem, given a sample budget $T$ as a parameter, may use a sample $x_{1}, \ldots,x_{T}$ of $T$ instances sampled independently from the distribution $\D$, and produce an approximate solution~$\wbar_T$.
The \emph{rate of convergence} of the algorithm is then defined as the expected excess loss of the predictor $\wbar_T$, given by 
$$
	\E[L(\wbar_T)] ~-~ \min_{w^{\st} \in \W} L(w^{\st}) ~,
$$ 
where the expectation is taken with respect to both the random choice of the training set and the internal randomization of the algorithm (which is allowed to be randomized).

\subsection{Online Logistic Regression}

Another optimization framework we consider is that of online logistic optimization, which we formalize as the following game between a player and an adversary.
On each round $t=1,2,\ldots,T$ of the game, the adversary first picks an instance $x_{t} \in \reals^{n}$, the player then chooses a linear predictor $w_{t} \in \W = \set{w \in \reals^{n} \,:\, \norm{w} \le D}$, observes  $x_{t}$ and incurs loss
\begin{align*}
	\ell(w_{t} , x_{t}) 
	~=~ \log \big( 1+\exp( x_{t} \cdot w_{t}) \big) ~.
\end{align*}
For simplicity we again assume that $\norm{x_{t}} \le 1$ for all $t$.
The goal of the player is to minimize his regret with respect to a fixed prediction from the set $\W$, which is defined as
\begin{align*}
	\text{Regret}_{T} 
	~=~ \sum_{t=1}^{T} \ell(w_{t}, x_{t}) 
		~-~ \min_{w^{*} \in \W} \sum_{t=1}^{T} \ell(w^\st, x_{t}) ~.
\end{align*}


\subsection{Information-theoretic Tools}

As a part of our lower bound proofs, we utilize two impossibility theorems that assert the minimal number of samples needed in order to distinguish between two distributions. 
We prove the following lower bound on the performance of any algorithm for this task.
 
%
\begin{theorem} \label{thm:coin}
Assume a coin with bias either $p$ or $p+\eps$, where $p \in (0,\half]$, is given.
Any algorithm that correctly identifies the coin's bias with probability at least $3/4$, needs no less than $p/16\eps^2$ tosses.
\end{theorem}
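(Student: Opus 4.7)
The plan is to use Le Cam's two-point method. Consider the product distributions $P_n = \mathrm{Bern}(p)^{\otimes n}$ and $Q_n = \mathrm{Bern}(p+\eps)^{\otimes n}$ over the outcomes of $n$ tosses, corresponding to the two candidate biases. Any algorithm succeeding with probability at least $3/4$ under both hypotheses induces a decision event $A$ (output ``bias equals $p+\eps$'') with $P_n(A) \le 1/4$ and $Q_n(A) \ge 3/4$. Therefore $\|P_n - Q_n\|_{\mathrm{TV}} \ge Q_n(A) - P_n(A) \ge 1/2$.

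Next, I would invoke Pinsker's inequality $\|P_n - Q_n\|_{\mathrm{TV}}^2 \le \tfrac{1}{2}\,\KL(P_n \| Q_n)$ to convert this into the divergence lower bound $\KL(P_n \| Q_n) \ge 1/2$. By the tensorization identity for KL divergence on product measures, $\KL(P_n \| Q_n) = n \cdot \KL(\mathrm{Bern}(p) \| \mathrm{Bern}(p+\eps))$, so matters reduce to an upper bound on the single-toss divergence.

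For that bound, I would use the standard inequality $\KL(\mathrm{Bern}(a)\|\mathrm{Bern}(b)) \le (a-b)^2/(b(1-b))$, which follows from $\KL \le \chi^2$ and the explicit Bernoulli computation of $\chi^2$. Applied with $a=p$, $b=p+\eps$, together with the observation that $p \le \half$ and $p+\eps \le 3/4$ imply $b(1-b) \ge (p+\eps)/4 \ge p/4$, this gives $\KL(\mathrm{Bern}(p)\|\mathrm{Bern}(p+\eps)) \le 4\eps^2/p$. Combining with $n \cdot \KL \ge 1/2$ yields $n \ge p/(8\eps^2)$, matching the claim up to a small constant; I would then either tighten the Le Cam step (e.g., via $\|P-Q\|_{\mathrm{TV}}^2 \ge 1/4$ and a slightly looser Pinsker constant in the relevant regime) or absorb the factor into the stated $1/16$.

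The main obstacle I anticipate is not the conceptual sequence — which is standard — but handling the edge cases cleanly. In particular, when $p+\eps > 3/4$ the inequality $b(1-b) \ge p/4$ fails; however in that regime $\eps > 1/4$, so $p/(16\eps^2) < p/1 \le 1/2 < 1$ and the claim is vacuous since any meaningful algorithm uses at least one toss. The delicate quantitative point is the factor of $p$ in the denominator of the bound on the per-trial KL: it arises because the Fisher information of the Bernoulli family at bias $b$ is $1/(b(1-b))$, which blows up as $b \to 0$, so distinguishing two nearby biases becomes \emph{easier} when $p$ is small — this is exactly what the lower bound reflects, and carefully preserving this factor through all the inequalities is what yields the desired $p/\eps^2$ rate rather than a looser $1/\eps^2$ rate.
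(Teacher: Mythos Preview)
Your proposal is correct and follows essentially the same route as the paper: Le Cam's two-point reduction, Pinsker's inequality, tensorization of KL, and an elementary upper bound on the Bernoulli KL divergence. The only cosmetic differences are that the paper bounds $\KL(\D_{p+\eps}\,\|\,\D_p)$ via $\log z \le z-1$, obtaining $\eps^2/(p(1-p)) \le 2\eps^2/p$ directly from $p\le\tfrac12$ (which sidesteps your $p+\eps>3/4$ edge case), and that the paper uses the looser TV gap $\ge 1/4$ to land exactly on the constant $1/16$; your bound $n\ge p/(8\eps^2)$ is in fact stronger.
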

The theorem applies to both deterministic and randomized algorithms; in case of random algorithms the probability is with respect to both the underlying distribution of the samples, and the randomization of the algorithm. 
The proof of \cref{thm:coin} is given, for completeness, in \cref{appendixA:InformationTheoretic}.

%

\section{Lower Bounds for Logistic Regression} \label{section:LowerBounds}
In this section we derive lower bounds for the convergence rate of stochastic logistic regression. 
For clarity, we lower bound the number of observations $T$ required in order to attain excess loss of at most $\eps$, which we directly translate to a bound for the convergence rate. The stochastic optimization lower bounds are then used to obtain  corresponding  bounds for the online setting.  

In \cref{section:lowerbound1Dim} we prove a lower bound for the one dimensional case, in \cref{section:lowerbound2Dim} we prove another lower bound for the multidimensional case, and in \cref{section:LowerBoundsOnline} we present our lower bounds for the online setting.


\subsection{One-dimensional Lower Bound for Stochastic Optimization} \label{section:lowerbound1Dim}

We now show that any algorithm for one-dimensional stochastic optimization with logistic loss, must observe at least $\Omega(D/\eps^{1.5})$ instances before it provides an instance with $\eps$ expected excess loss. This directly translates to a convergence rate of $\Omega(D^{2/3}/T^{2/3})$. Formally, the main theorem of this section is the following.

\begin{theorem}\label{Theorem:1DimLowerBound}
Consider the one dimensional stochastic logistic regression setting with a fixed sample budget $T = \O(e^{D})$. 
For any algorithm $\A$ there exists a distribution $\D$ for which the expected excess loss of $\A$'s output 
 is at least $\Omega(D^{2/3}/T^{2/3})$.
\end{theorem}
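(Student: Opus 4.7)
The plan is to use Le Cam's two-point method, reducing to the hypothesis-distinguishing lower bound of \cref{thm:coin}. I would construct a pair of distributions $\D_0, \D_1$, each supported on two values of $x$, whose Bernoulli parameters differ by a small amount $\delta$; by \cref{thm:coin} these cannot be distinguished from fewer than $\Theta(p_0/\delta^2)$ samples, and a standard argument then forces any algorithm to incur expected excess loss of order $\Delta L$ on at least one of the two distributions, where $\Delta L$ is the loss gap between their minimizers.

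Concretely, a natural construction is $\D_i : \Pr[x = +1] = p_i$, $\Pr[x = -b] = 1 - p_i$, for a small-magnitude $b = b(D)$ (for instance $b = 1/D^\beta$ with $\beta > 1$ chosen as a function of the target $\eps$). I would pick $p_0$ to be a critical value at which the unconstrained minimizer of $L_0$ sits exactly at the boundary $w = D$; since $\log(1+e^{-w})$ is exponentially flat there, the Hessian of $L_0$ at the optimum is small (of order $1/D^{2\beta}$ in the construction above), which is the source of the hardness. Perturbing to $p_1 = p_0 + \delta$ pushes the minimizer $w_1^*$ into the interior of $[-D,D]$. By \cref{thm:coin}, distinguishing $\D_0$ from $\D_1$ requires $T \gtrsim p_0/\delta^2$; conversely, a standard Le-Cam reduction shows that any $\bar w$ must incur expected excess loss bounded below by a constant times $\min\set{L_0(w_1^*) - L_0(w_0^*),\, L_1(w_0^*) - L_1(w_1^*)}$ on one of the two distributions. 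Tuning the triple $(b, p_0, \delta)$ as a function of the target $\eps$ so that this loss gap is $\ge \eps$ and simultaneously $p_0/\delta^2 \ge cD/\eps^{3/2}$, one recovers $T \ge \Omega(D/\eps^{3/2})$, which rearranges to the claimed $\eps \ge \Omega(D^{2/3}/T^{2/3})$.

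The main obstacle is correctly estimating the loss gap $\Delta L$. A naive quadratic expansion around the minimizer gives only a $1/T$ rate, because the local curvature $L_p''(w^*)$ and the sensitivity $(\partial w^*/\partial p)^2$ conspire to cancel in the quadratic formula for excess loss. Obtaining the sharper $T^{-2/3}$ rate therefore requires taking $\delta$ large enough that $|w_0^* - w_1^*|$ is comparable to $D$, so that $\Delta L$ reflects the \emph{global} shape of $\log(1+e^{\pm w})$ --- in particular its linear growth for large positive arguments, together with the boundary constraint at $w = D$ --- rather than only the local quadratic around either minimizer. Balancing this non-Taylor regime against the distinguishing constraint from \cref{thm:coin}, and choosing $\beta$ (equivalently $b$) as a function of $\eps$ so that both requirements are met simultaneously, is the delicate heart of the argument.
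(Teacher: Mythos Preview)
Your proposal is correct and follows essentially the same approach as the paper. Both use Le Cam's two-point method with a pair of two-atom distributions --- one instance of magnitude close to $1$ (yielding an approximately linear logistic loss in the region $w \in [D/2,D]$) and one of small magnitude $\Theta(\sqrt{\eps}/D)$ (yielding an approximately quadratic loss there) --- and reduce to \cref{thm:coin}. Your diagnosis that a local quadratic expansion gives only a $1/T$ rate and that one must instead exploit the global linear growth so that the two minimizers are separated by $\Theta(D)$ is exactly the mechanism behind the paper's \cref{Lemma:TwoAdversaries1Dim}.

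One small difference in execution: the paper uses a \emph{symmetric} perturbation ($p = \theta/2 \pm \eps/D$) so that the two expected losses look like $\pm (\eps/D)\,w + (\eps/8D^2)\,w^2$ on $[D/2,D]$, placing the two minimizers near $D/2$ and $D$ respectively; your framing instead pins one minimizer at the boundary $w=D$ and perturbs one-sidedly. Both lead to the same $\Omega(D/\eps^{3/2})$ sample bound, but the symmetric version makes the excess-loss computation (your $\Delta L$) more transparent, since both losses are explicit quadratics on the same interval. If you carry out your version, you will want $b \asymp \sqrt{\eps}/D$ (not a pure power of $D$) and $\delta \asymp \eps/D$; the paper's \cref{Lemma:TwoAdversaries1Dim} is precisely the ``delicate balancing'' you allude to at the end.
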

%

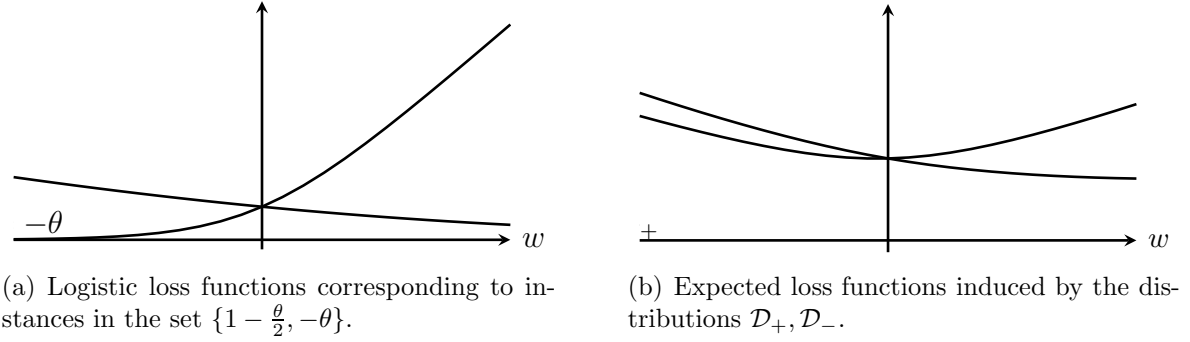
\begin{figure}[h]
\centering
\subfigure[Logistic loss functions corresponding to instances in the set $\{1-\tfrac{\theta}{2}, -\theta\}$.]{ \label{fig:losses1dim}
\begin{tikzpicture}
    \begin{axis}[
    	width=0.4\textwidth,
	height=0.2\textwidth,
	scale only axis,
	xmin=-5,xmax=5,
	ymin=-0.2,ymax=5,
	axis lines=middle, 
	xlabel={$w$},
	xlabel style={right},
	ticks=none, 
	line width=1pt]
    \addplot[line width=1pt,domain=-5:5] {ln(1+exp(x*(1-1/10)))}
    node[pos=0.95,xshift=-35pt] {$x=1-\frac{\theta}{2}$};
    \addplot[line width=1pt,domain=-5:5] {ln(1+exp(x*(-1/5)))}
    node[pos=0.15,yshift=10pt] {$x=-\theta$};
    \end{axis}
\end{tikzpicture}}
\hskip 0.05\textwidth
\subfigure[Expected loss functions induced by the distributions $\D_{+},\D_{-}$.]{\label{fig:values1dim}
\begin{tikzpicture}
    \begin{axis}[
    	width=0.4\textwidth,
	height=0.2\textwidth,
	scale only axis,
	xmin=-5,xmax=5,
	ymin=-0.1,ymax=2,
	axis lines=middle, 
	xlabel={$w$},
	xlabel style={right},
	ticks=none, 
	line width=1pt,
	font=\small]
    \addplot[line width=1pt,domain=-5:5] {
      0.05*ln(1+exp(x*(1-1/10)))
      + 0.95*ln(1+exp(x*(-1/5)))
    }
    node[pos=0.9,yshift=-8pt] {$L_{-}$};
    \addplot[line width=1pt,domain=-5:5] {
      0.20*ln(1+exp(x*(1-1/10)))
      + 0.80*ln(1+exp(x*(-1/5)))
    }
    node[pos=0.9,yshift=10pt] {$L_{+}$};
    \end{axis}
\end{tikzpicture}}
%
\caption{Loss functions used in the one-dimensional construction, and the induced expected loss functions.} \label{fig:1DimLower}
\end{figure}

The proof of \cref{Theorem:1DimLowerBound} is given at the end of this section; 
here we give an informal proof sketch.
Consider distributions $\D$ over the two-element set $\{ 1-\frac{\theta}{2},-\theta  \}$. 
For $w\in [D/2,D]$ and $\theta\ll 1$, the losses of these instances are approximately linear/quadratic with opposed slopes (see \cref{fig:losses1dim}). 
Consequently, we can build a distribution with an expected loss which is quadratic in $w$; upon perturbing the latter distribution by $\pm \eps$ we get two distributions $\D_{+},\D_{-}$ with expected losses $L_{+},L_{-}$ that are approximately linear in $w$ with slopes $\pm \eps$ (see \cref{fig:values1dim}). 
An algorithm that attains a low expected excess loss on both these distributions can be used to distinguish between them, we then utilize an information theoretic impossibility theorem  
to bound the number of observations needed in order to distinguish between two distributions.

\begin{figure}[h]
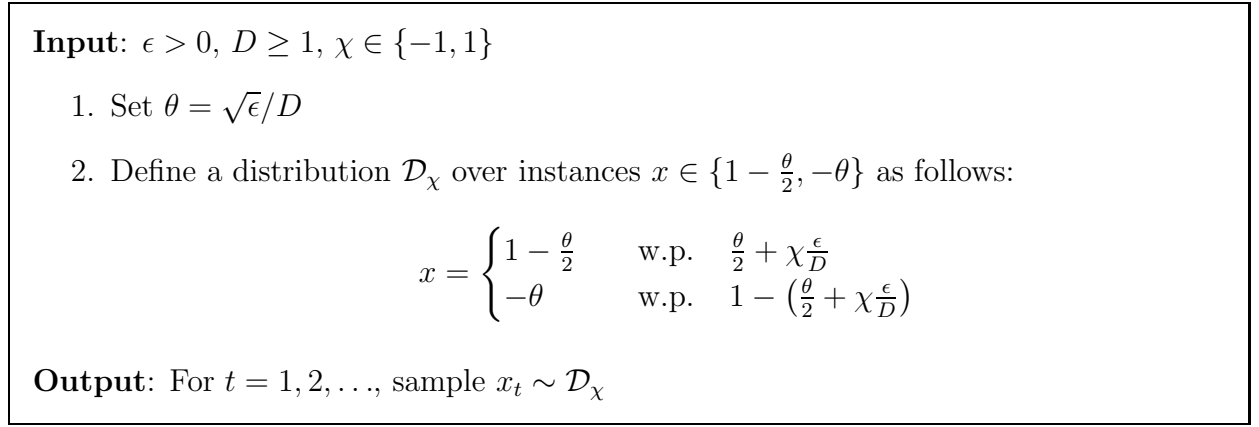

\begin{framed}
 \textbf{Input}: $\eps>0$, $D \geq 1$, $\chi\in\{-1,1\}$
\begin{enumerate}
\item Set $\theta = \sqrt{\eps}/D$
\item Define a distribution $\D_{\chi}$ over instances $x \in \{ 1-\tfrac{\theta}{2},-\theta \}$ as follows:
\begin{equation}\nonumber
x=
\begin{cases}
	1-\frac{\theta}{2} 	&\quad \text{w.p. ~~$\frac{\theta}{2}+\chi\frac{\eps}{D}$ } \\ 
	-\theta            	&\quad \text{w.p. ~~$1-\lr{\frac{\theta}{2}+\chi \frac{\eps}{D}}$}\\ 
\end{cases}
\end{equation}
\end{enumerate}
 \textbf{Output}: For $t=1,2,\ldots$, sample  $x_t\sim \D_{\chi}$
\end{framed}
\caption{Two distributions: $\D_{\chi}$, $\chi\in \{-1,1\}$; any algorithm that attains an $\eps$ expected excess logistic loss on both of them requires $\Omega(D/\eps^{1.5})$ observations.}
\label{fig:Adversary1Diml}
\end{figure}

In \cref{fig:Adversary1Diml} we present two distributions, 
which we denote by $\D_{+}$ and $\D_{-}$. 
We denote by $L_{+},L_{-}$ the expected logistic loss of a predictor $w\in \W$ with respect to $\D_{+}, \D_{-}$, i.e.,
\begin{align*}
	L_{\chi}(w) 
	&~=~ \E_{\D_{\chi}}[ \ell(w,x) ] \\
	&~=~ \lr{ \frac{\theta}{2}+\chi\frac{\eps}{D} } \ell \lr{w,1-\frac{\theta}{2}}
	+ \lr{ 1-\frac{\theta}{2}-\chi \frac{\eps}{D} } \ell \lr{w,-\theta} ~,
	\qquad \chi\in \{-1,1\} ~.
\end{align*}
The following lemma states that it is impossible attain a low expected excess loss on both $\D_{+}$ and $\D_{-}$ simultaneously.
Here we only give a sketch of the proof; the complete proof is deferred to \cref{Section:Proof:Lemma:TwoAdversaries1Dim}.
\begin{lemma} \label{Lemma:TwoAdversaries1Dim}
Given $D\geq 1$ and $\Omega(e^{-D})\leq \eps \leq 1/25$, 
consider the  distributions $\D_{+},\D_{-}$ defined in \cref{fig:Adversary1Diml}.
Then the following holds:
\begin{align*}
	L_{+}(w)-\min_{w^*\in \W }L_{+}(w^*) 
	&~\ge~  \eps /20~, \qquad \forall ~ w \in [ \tfrac{3}{4} D, D] ~,\\
	L_{-}(w)-\min_{w^*\in \W}L_{-}(w^*) 
	&~\ge~  \eps /20~, \qquad \forall ~ w \in [-D, \tfrac{3}{4} D] ~.
\end{align*}
\end{lemma}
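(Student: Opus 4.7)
The proof reduces to studying the derivative of $L_\chi$ on $\W = [-D, D]$. By direct differentiation,
\[
L_\chi'(w) = p_\chi \lr{1-\tfrac{\theta}{2}} \sigma\lr{\lr{1-\tfrac{\theta}{2}}w} - (1-p_\chi)\theta \, \sigma(-\theta w), \qquad p_\chi = \tfrac{\theta}{2} + \chi \tfrac{\eps}{D},
\]
where $\sigma(z) = (1+e^{-z})^{-1}$. The key quantitative observations are: (i) the choice $\theta = \sqrt{\eps}/D$ forces $|\theta w| \leq \sqrt{\eps} \leq 1/5$ on $\W$, so $\sigma(-\theta w)$ is close to $1/2$ (with a controlled Taylor expansion); and (ii) $L_\chi$ is convex as a nonnegative combination of convex logistic losses. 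The plan is to use these to locate the minimizer of $L_\chi$ on $\W$ from the sign of $L_\chi'$, and then bound the excess loss quantitatively by integrating $|L_\chi'|$.

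For the bound on $L_-$, I would first show $L_-$ is monotonically decreasing on $\W$. By convexity, this reduces to verifying $L_-'(D) < 0$. Using $\sigma((1-\theta/2)D) \leq 1$, the expansion $\sigma(-\theta D) = 1/2 - \theta D/4 + O((\theta D)^3)$, and $p_- = \theta/2 - \eps/D$, a direct calculation yields $L_-'(D) \leq -\eps/D + O(\eps^{3/2}/D) + O(\eps/D^2) \leq -\eps/(2D)$. Hence $\min_\W L_- = L_-(D)$ and by monotonicity $L_-(w) \geq L_-(3D/4)$ for $w \in [-D, 3D/4]$. It then suffices to show $L_-(3D/4) - L_-(D) \geq \eps/20$: since $L_-'$ is increasing (by convexity of $L_-$) and negative, $|L_-'(s)| \geq |L_-'(D)| \geq \eps/(2D)$ for all $s \leq D$, so $L_-(3D/4) - L_-(D) = \int_{3D/4}^D |L_-'(s)| \, ds \geq (D/4) \cdot \eps/(2D) = \eps/8$.

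For the bound on $L_+$, an analogous Taylor expansion at $w = 3D/4$ shows $L_+'(3D/4) > 0$: the leading $O(\theta) = O(\sqrt\eps/D)$ contributions from the two summands cancel, leaving a residual of order $+\eps/D$. By convexity, $L_+$ is increasing on $[3D/4, D]$, the unconstrained minimizer $w^*_+$ lies strictly below $3D/4$, and $\min_{w \in [3D/4, D]} L_+(w) = L_+(3D/4)$. To show $L_+(3D/4) - L_+(w^*_+) \geq \eps/20$ I would locate $w^*_+$: solving $L_+'(w^*_+) = 0$ with the approximation $\sigma(-\theta w^*_+) \approx 1/2$ gives $\sigma((1-\theta/2)w^*_+) \approx 1/(1+2\sqrt\eps)$, hence $w^*_+ \approx \tfrac12 \log(1/(2\eps))$, which under the hypothesis $\eps \geq \Omega(e^{-D})$ is at most $D/2 + O(1)$. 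Thus $3D/4 - w^*_+ \gtrsim D/4$, and integrating the slope bound $L_+'(s) \gtrsim \eps/D$ on a suitable subinterval of $[w^*_+, 3D/4]$ where $L_+'$ exceeds a constant fraction of $L_+'(3D/4)$ yields the claim. The primary technical obstacle throughout is the delicate near-cancellation in $L_\chi'$: the two leading $O(\sqrt\eps/D)$ terms almost cancel, so one must carefully track second-order corrections from $(1-\theta/2)$ and from the Taylor expansion of $\sigma(-\theta w)$ to confirm the sign and magnitude of the residual $\eps/D$ term uniformly in $w$.
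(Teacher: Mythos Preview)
Your approach is genuinely different from the paper's and largely sound, but the $L_+$ half is more circuitous than necessary and has a real gap at the end.

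\textbf{Comparison with the paper.} The paper works with \emph{function values}: it approximates $\ell(w,1-\tfrac{\theta}{2})$ linearly and $\ell(w,-\theta)$ quadratically on $[D/2,D]$ (with error $\le \eps/40$), writes $L_\chi$ explicitly as a quadratic on that interval, and then simply compares $L_+(w)$ with $L_+(D/2)$ and $L_-(w)$ with $L_-(D)$. For $L_-$ the paper needs a second step, checking $L_-'(D/2)<0$ to extend the bound from $[D/2,\tfrac{3}{4}D]$ to $[-D,\tfrac{3}{4}D]$. Your derivative-based argument for $L_-$ is actually cleaner: showing $L_-'(D)\le -\eps/(2D)$ immediately gives global monotonicity via convexity, and the integration over $[\tfrac{3}{4}D,D]$ yields $\eps/8$ in one stroke. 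That part works.

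\textbf{The gap in the $L_+$ argument.} Locating the unconstrained critical point $w_+^*\approx \tfrac12\log(1/(2\eps))$ is an unnecessary detour, and your final sentence---``integrating the slope bound $L_+'(s)\gtrsim\eps/D$ on a suitable subinterval of $[w_+^*,\tfrac{3}{4}D]$ where $L_+'$ exceeds a constant fraction of $L_+'(\tfrac{3}{4}D)$''---is not a proof: you have not shown that this subinterval has length $\Omega(D)$. That would require control of $L_+''$, which you never establish. Moreover, your linear approximation $\sigma((1-\tfrac{\theta}{2})w)\approx 1$ breaks down below $w=D/2$ (this is exactly where the hypothesis $\eps\ge 40e^{-0.45D}$ is saturated), so you cannot freely push the analysis toward $w_+^*\approx 0.225D$.

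\textbf{The fix.} You never need $w_+^*$. Since trivially $\min_{\W}L_+\le L_+(D/2)$, it suffices to show $L_+(\tfrac{3}{4}D)-L_+(D/2)\ge\eps/20$. The same Taylor bookkeeping you already outlined gives $L_+'(s)\ge c\,\eps/D$ for all $s\in[D/2,\tfrac{3}{4}D]$ (the approximation is valid on this interval by the paper's Step~1), and integrating over an interval of length $D/4$ yields $\ge c\eps/4$. This is effectively what the paper does with function values, translated into your derivative language.
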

\begin{proof}[Proof (sketch)]
First we show that for $w \in [\tfrac{1}{2}D,D]$, the losses of  the instances $1-\frac{\theta}{2},-\theta$ are approximately linear/quadratic, i.e.,
\begin{align*}
 \abs{ \ell(w,1-\tfrac{\theta}{2}) - (1-\tfrac{\theta}{2})w }
 &~\le~ \frac{\eps }{40} ~,  \qquad \forall ~ w\in [\tfrac{1}{2}D,D] ~, \\
 \abs{ \ell(w,-\theta) - \left(\log 2-\tfrac{\theta}{2}w+\tfrac{1}{8} (\theta w)^2\right) }
 &~\le~  \frac{\eps }{40} ~, \qquad \forall ~ w \in [\tfrac{1}{2}D,D] ~.
 \end{align*}
Using the above approximations  and $\theta = \sqrt{\eps}/D$, we show that $L_{+}(w)\approx \eps w/D+\eps w^2/8D^2$ and  $L_{-}(w)\approx -\eps w/D+\eps w^2/8D^2$ for $w\in[\tfrac{1}{2}D,D]$, where ``$\approx$" denotes equality up to an additive term  of $\eps/40$. 
Thus,
\begin{align*}
	L_{+}(w)-\min_{w^* \in \W}L_{+}(w^*)
		&~\ge~ L_{+}(w)-L_{+}(D/2)  
		~\ge~ {\eps }/{20} ~,
		&&\forall ~ w\in [\tfrac{3}{4}D,D] ~,\\
	L_{-}(w)-\min_{w^* \in \W}L_{-}(w^*) 
		&~\ge~ L_{-}(w)-L_{-}(D)
		~\ge~ {\eps }/{20} ~,
		&&\forall ~ w\in [\tfrac{1}{2}D,\tfrac{3}{4}D] ~.
\end{align*}
Showing that $L_{-}$ is monotonically decreasing  in $[-D, \tfrac{1}{2} D]$, extends the latter inequality to  $[-D, \tfrac{3}{4} D]$.  
\end{proof}
We are now ready to prove \cref{Theorem:1DimLowerBound}.
\begin{proof}[Proof of \cref{Theorem:1DimLowerBound}]
Consider some algorithm $\A$; we will show that if $\A$ observes $T$ samples from a distribution $\D$ which is either $\D_{+}$ or $\D_{-}$, then the expected excess loss $\tilde{\epsilon}$ that $\A$ can guarantee is lower bounded by $\Omega(D^{2/3}T^{-2/3})$.

The excess loss is non negative; therefore, 
if $\mathcal{A}$ guarantees an expected excess loss smaller than $\tilde{\epsilon}:=\eps /80$, then by Markov's inequality   
 it achieves an excess loss smaller than  $\eps /20$, w.p. $\geq 3/4$. Denoting by  $\wbar_T$ the predictor that $\A$ outputs after $T$ samples, then according to \cref{Lemma:TwoAdversaries1Dim}, attaining an excess loss smaller than $\eps /20$ on the distribution $\D_{+}$ (respectively $\D_{-}$) implies $\wbar_T \le \tfrac{3}{4} D$ (respectively $\wbar_T>\tfrac{3}{4} D$).

Since  $\A$ achieves an excess loss smaller than $\eps /20$ w.p. $\geq3/4$ for any distribution $\D$  
we can use its output  to identify the right distribution w.p. $\geq 3/4$. This can be done as follows:
\begin{align*}
\text{If    }\;  \wbar_T \leq \tfrac{3}{4} D, \;\text{ Return: ``$\D_{+}$" ;}\\
\text{If    }\;  \wbar_T > \tfrac{3}{4} D, \;\text{ Return: ``$\D_{-}$" .}
\end{align*}
According to \cref{thm:coin} distinguishing between these two distributions (``coins") w.p. $\geq3/4$ requires that the number of observations $T$ to be lower bounded as follows:
$$
	T 
	~\ge~ \frac{\theta/2-\eps/D}{16(2\eps/D)^2} 
	~\ge~ \frac{1}{256} \frac{D}{\eps^{1.5}} ~,
$$
We used $\theta/2-\eps/D$ as a lower bound on the bias of  $\D_{-}$; since $\theta = \sqrt{\eps}/D$ and $\epsilon \leq 1/25$  it follows that $\theta/2-\eps/D \geq \sqrt{\epsilon}/4D$.
We also used $2\eps/D$ as the bias between the ``coins" $\D_{+}$, $\D_{-}$. Using the above inequality together with $\tilde{\epsilon}=\epsilon/80$ yields a
lower bound of $\frac{1}{4000}{D^{2/3}}T^{-2/3}$ on the expected excess loss.
\end{proof}

\subsection{Multidimensional Lower Bound  for Stochastic Optimization} \label{section:lowerbound2Dim}
We now construct two distribution over instance vectors from the unit ball of $\reals^{2}$, and prove that any algorithm that attains an expected excess loss at most $\eps$ on both distributions requires $\Omega(D/\eps^{2})$ samples in the worst case. This directly translates to a convergence rate of $\Omega(\sqrt{D/T})$.
For $n > 2$ dimensions, we can embed the same construction in the unit ball of $\reals^{n}$, thus our bound holds in any dimension greater than one. 
The main theorem of this section is the following.
\begin{theorem}\label{Theorem:2DimLowerBound}
Consider the multidimensional stochastic logistic regression setting with $D \ge 2$ and a fixed sample budget $T = \O(e^D)$. 
For any algorithm $\A$ there exists a distribution $\D$ such that the expected excess loss of $\A$'s output is at least $\Omega(\sqrt{D/T})$.
\end{theorem}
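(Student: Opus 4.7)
The plan is to adapt the strategy of \cref{Theorem:1DimLowerBound} to the two-dimensional setting. Specifically, I will construct two distributions $\D_+, \D_-$ supported on the 2D unit ball such that (a) by \cref{thm:coin}, distinguishing them requires $\Omega(D/\eps^2)$ samples, and (b) any algorithm attaining expected excess loss significantly below $\eps$ on both, combined with Markov's inequality, produces such a distinguisher. Inverting $T = \Omega(D/\eps^2)$ then yields the excess-loss bound $\Omega(\sqrt{D/T})$.

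The two-dimensional construction is what does the heavy lifting: the extra coordinate should be exploited to create an expected loss whose behavior near the optimum is closer to a hinge (i.e., linear) than to the locally-quadratic behavior that forced the weaker 1D bound. A natural template is to place instances $x_\pm = (\alpha, \pm\beta)$ on the unit circle, with probabilities $\tfrac12 \pm \chi\delta$, for a sign $\chi \in \{-1,1\}$ indexing the two distributions. The scale of $\alpha$ should be chosen so that the norm constraint $\norm{w} \le D$ is active at the optimum (otherwise the diameter $D$ would not enter the bound at all), while $\beta$ and $\delta$ control the hinge-like separation of the two optima in the $e_2$-direction. The central technical step is then to prove a 2D analogue of \cref{Lemma:TwoAdversaries1Dim}: that on the ``wrong'' half-plane (with respect to $\mathrm{sign}(w_2)$), the excess loss is at least $\Omega(\eps)$. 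This would use a linear approximation of $\log(1+e^z)$ in the regime $|z|\gg 1$, in the same spirit as the 1D proof, but avoiding the quadratic correction that dominated there.

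Given such an excess-loss lemma, the reduction proceeds just as in \cref{Theorem:1DimLowerBound}: an algorithm $\A$ guaranteeing expected excess loss below $\tilde\eps = \Omega(\eps)$ produces, via Markov's inequality, a predictor $\wbar_T$ lying in the correct half-plane with probability at least $3/4$; thresholding on $\mathrm{sign}(\wbar_T \cdot e_2)$ then gives a coin distinguisher, and \cref{thm:coin} forces $T = \Omega(p/\delta^2)$, where the base bias $p$ and the perturbation $\delta$ are calibrated so that $p/\delta^2 = \Omega(D/\eps^2)$. The main obstacle is tuning $\alpha, \beta, \delta$ so that all three desiderata hold simultaneously: the minimizer of $L_\chi$ genuinely sits on the boundary of $\W$, the penalty for being in the wrong half-plane is exactly $\Omega(\eps)$, and the resulting coin complexity matches $\Omega(D/\eps^2)$ rather than the weaker 1D bound $\Omega(D/\eps^{3/2})$. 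The assumption $T = O(e^D)$ will be needed to ensure that the required perturbation $\delta$ is not exponentially small in $D$, which is the regime where the linear approximations of the logistic loss that drive the analysis remain valid.
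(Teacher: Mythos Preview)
Your high-level reduction (construct two nearby distributions, derive an excess-loss separation lemma, convert via Markov's inequality to a distinguisher, invoke a coin-type lower bound) is exactly the paper's route. The gap is in the construction itself.

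The paper does \emph{not} use a two-instance family $x_\pm=(\alpha,\pm\beta)$; it uses \emph{three} instances:
\[
x_0=\tfrac{1}{D}(0,-1),\qquad x_l=\tfrac{1}{\sqrt2}(-1,1),\qquad x_r=\tfrac{1}{\sqrt2}(1,1),
\]
with $x_0$ carrying probability $p$ chosen so that $1-p\approx 1/D$, and $x_l,x_r$ carrying probabilities $\tfrac{1\pm\chi\eps}{2}(1-p)$. The anchor $x_0$ is the missing idea, and it does two jobs your template cannot:
\begin{itemize}
\item[(i)] It pins the unperturbed optimum at an \emph{interior} point $w_0=(0,0.9D)$ where $x_l\cdot w_0=x_r\cdot w_0=\Theta(D)\gg 1$, so $\ell(\cdot,x_l)$ and $\ell(\cdot,x_r)$ are linear (up to $e^{-\Theta(D)}$) near $w_0$. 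This is what turns an $\eps$-perturbation of the $x_l/x_r$ mixture into an $\Omega(\eps)$ excess-loss gap across the hyperplane $\{w[1]=0\}$ (\cref{Lemma:TwoAdversaries}). Your intuition that the constraint $\|w\|\le D$ must be active at the optimum is actually the wrong mechanism; $w_0$ is strictly interior.
\item[(ii)] The factor $D$ in the sample complexity comes from the \emph{rarity} of the informative instances: since $1-p\approx 1/D$, the effective number of $x_l/x_r$ samples in $T$ rounds is $\approx T/D$, and the paper uses a three-atom variant of \cref{thm:coin} (its Lemma~7) to get $T\ge \Omega\!\big(\tfrac{1}{(1-p)\eps^2}\big)=\Omega(D/\eps^2)$.
\end{itemize}

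With only two instances $x_\pm=(\alpha,\pm\beta)$ at probabilities $\tfrac12\pm\delta$, symmetry forces the unperturbed optimum onto $\{w_2=0\}$, where $x_+\cdot w=x_-\cdot w=\alpha w_1$. If $\alpha w_1$ is $O(1)$ there, the $w_2$-direction is locally quadratic (curvature $\Theta(\beta^2)$) and the excess-loss separation scales like $\delta^2$, not $\delta$; plugging $\delta=\eps/\sqrt D$ gives a gap of order $\eps^2/D$, not $\eps$. If instead $\alpha w_1$ is driven large and negative (boundary optimum), both losses and their difference are $e^{-\Theta(D)}$ and the perturbation has exponentially small effect. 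Either way you recover at best the one-dimensional picture, not $\Omega(\sqrt{D/T})$.

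So the fix is: add the anchor $x_0$, prove the two-dimensional analogue of \cref{Lemma:TwoAdversaries1Dim} by comparing $L_\chi$ to the unperturbed $L_0$ and using the linearity of $\ell(\cdot,x_l),\ell(\cdot,x_r)$ near $w_0$, and replace \cref{thm:coin} by its three-atom version to extract the $1/(1-p)\approx D$ factor. After that, your Markov-plus-distinguisher wrap-up goes through verbatim.
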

\cref{Theorem:2DimLowerBound} is proved at the end of this section. We bring here an informal description of the proof:

\begin{figure}[h]
\centering
\begin{tikzpicture}[scale=1.85, line width=1pt]
  \fill[gray!40] ++ (110:1) arc (110:-20:1);
  \fill[gray!40] ++ (70:1) arc (70:200:1);
  \begin{scope}
    \clip ++ (110:1) arc (110:-20:1);
    \clip ++ (70:1) arc (70:200:1);
    \fill[gray!90!black] (0,0) circle (1cm);
  \end{scope}
  \draw [->,dotted] (-1.2,0) -- (1.2,0);
  \draw [->,dotted] (0,-1.2) -- (0,1.2);
  \draw (0,0) circle (1cm);
  \draw [->] (0,0) -- (0.6,0.6) node[below,xshift=8pt] {$x_{r}$};
  \draw [->] (0,0) -- (-0.6,0.6) node[below,xshift=-6pt] {$x_{l}$};
  \draw [->] (0,0) -- (0,-0.3) node[right] {$x_{0}$};
\end{tikzpicture}
\caption{Instances used in  multidimensional lower bound.}
\label{fig:LowerBound2Dim}
\end{figure}
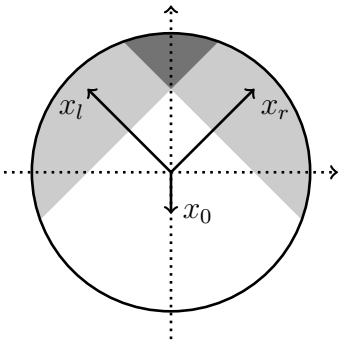

Consider distributions that choose instances among the set $\{ x_0,x_l,x_r \}$ depicted in \cref{fig:LowerBound2Dim}.  
The shaded areas in \cref{fig:LowerBound2Dim} depict regions in the domain $\W$ where either $\ell(\cdot,x_l)$ or $\ell(\cdot,x_r)$ is approximately linear. 
The dark area represents the region in which both loss functions are approximately linear.
By  setting the probability of $x_{0}$ much larger than the others we can construct a distribution over the instances $\{ x_0,x_l,x_r\}$ such that the minima of the induced expected loss function lies in the black area.
Perturbing this distribution by $\pm\eps$ over the odds of choosing $x_l,x_r$ we attain two distributions $\D_{+},\D_{-}$ whose induced expected losses $L_{+},L_{-}$ are almost linear over in the dark area, with opposed $\pm \eps$ slopes. 
An algorithm that attains a low expected excess loss on both distributions can be used to distinguish between them.
This allows us to use information theoretic arguments to lower bound the number of samples needed for the optimization algorithm.
\begin{figure}[h]
\begin{framed}
 \textbf{Input}: $\eps>0$, $D \geq 2$, $\chi\in\{-1,1\}$
\begin{enumerate}
\item Set $p \in [0,1]$ such that $\frac{p}{1-p} = \frac{D}{\sqrt{2}}\frac{1+e^{0.9}}{1+e^{-0.9D/\sqrt{2}}}$ and define:
\begin{align*}
x_0 = \tfrac{1}{D} (0,-1)\tr ~, \quad
x_l =  \tfrac{1}{\sqrt{2}}(-1,1)\tr ~, \quad 
x_r = \tfrac{1}{\sqrt{2}}(1,1)\tr
\end{align*} 
\item Define a distribution $\D_{\chi}$, that choose instances $x \in \{x_0,x_l,x_r \}$ as follows:
\begin{equation}\nonumber
x=
\begin{cases}
 x_0	&\quad \text{w.p. ~~ $p$ } \\ 
 x_l	&\quad \text{w.p. ~~ $\frac{1+\chi\eps}{2} \cdot (1-p)$ }\\ 
 x_r	&\quad \text{w.p. ~~ $\frac{1-\chi\eps}{2} \cdot (1-p)$ }
\end{cases}
\end{equation}
\end{enumerate}
 \textbf{Output}: For $t=1,2,\ldots$, sample  $x_t\sim \D_{\chi}$
\end{framed}
\caption{Two distributions: $\D_{\chi}$, $\chi\in \{-1,1\}$; any algorithm that attains an $\eps$  excess logistic loss on both of them requires $\Omega(D/\eps^{2})$ observations.}
\label{fig:Adversary2Diml}
\end{figure}
In \cref{fig:Adversary2Diml} we present the distributions $\D_{+},\D_{-}$. 
We denote by $L_{+}$ and $L_{-}$ the expected loss functions induced by $\D_{+}$ and $\D_{-}$ respectively, that are given by
\begin{align*}
	L_{\chi}(w)
	~=~ p \cdot \ell(w,x_0)+\frac{1+\chi \eps}{2} (1-p) \cdot \ell(w,x_l) + \frac{1-\chi\eps}{2} (1-p) \cdot \ell(w,x_r), 
	\qquad \chi\in\{-1,1\}
\end{align*} 
In the following lemma we state that it is impossible attain a low expected excess loss on both $\D_{+}$ and $\D_{-}$ simultaneously.
\begin{lemma}\label{Lemma:TwoAdversaries}
Given $D \ge 2$ and $\Omega(e^{-D}) \le \eps \le 1/10D$, consider $\D_{+}, \D_{-}$ as defined in \cref{fig:Adversary2Diml}. Then the following holds:
\begin{align*}
	L_{+}(w)-\min_{w^*\in\W}L_{+}(w^*) 
		&~\ge~  \eps/20 ~, 
		\qquad \forall ~ w: w[1]\leq 0 ~, \quad\text{and}  \\
	L_{-}(w)-\min_{w^*\in\W}L_{-}(w^*) 
		&~\ge~  \eps/20 ~, 
		\qquad \forall ~ w: w[1]\geq 0 ~.
\end{align*}
\end{lemma}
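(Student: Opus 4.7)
The approach is the symmetric--antisymmetric decomposition
\[
L_\chi(w) = L_0(w) + \chi\eps Q(w), \qquad L_0 \eqdef p\,\ell(\cdot,x_0) + \tfrac{1-p}{2}\bigl[\ell(\cdot,x_l)+\ell(\cdot,x_r)\bigr], \qquad Q \eqdef \tfrac{1-p}{2}\bigl[\ell(\cdot,x_l)-\ell(\cdot,x_r)\bigr].
\]
The reflection $w[1]\mapsto -w[1]$ swaps $x_l$ and $x_r$, so $L_0$ is invariant under it and $Q$ flips sign; therefore the $L_-$ half of the lemma follows from the $L_+$ half by reflection, and I concentrate on $L_+$ below.

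For any $w$ with $w[1]\le 0$ we have $w\cdot x_l-w\cdot x_r=-\sqrt{2}\,w[1]\ge 0$, so $\ell(w,x_l)\ge \ell(w,x_r)$ and $Q(w)\ge 0$, giving
\[
L_+(w) \;=\; L_0(w)+\eps Q(w) \;\ge\; L_0(w) \;\ge\; L_0^\st \;\eqdef\; \min_{w\in\W} L_0(w).
\]
Since $L_0$ is convex (a convex combination of logistic losses) and symmetric in $w[1]$, its minimizer lies at a point of the form $(0,v_0)$, and the value of $p$ prescribed in Figure~\ref{fig:Adversary2Diml} is engineered precisely so that the stationarity equation $\partial_v L_0(0,v_0)=0$ holds at $v_0 = 0.9\,D$. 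With this in hand the lemma reduces to exhibiting one explicit $w^\dag\in\W$ for which $L_+(w^\dag)\le L_0^\st-\eps/20$, because chaining the two displays above then yields $L_+(w)-\min L_+ \ge L_0^\st-(L_0^\st-\eps/20)=\eps/20$ for every $w$ with $w[1]\le 0$.

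For the candidate I would take $w^\dag=(u^\dag,v_0)$ on the boundary circle $\|w\|=D$, i.e.\ $u^\dag=\sqrt{D^2-v_0^2}=\sqrt{0.19}\,D$, which places $w^\dag$ in the interior of the ``dark region'' of Figure~\ref{fig:LowerBound2Dim}. In that region both $w^\dag\cdot x_l$ and $w^\dag\cdot x_r$ are of order $D$, and the elementary two-sided bound $0\le\log(1+e^z)-z\le e^{-z}$ (for $z\ge 0$) lets me replace $\ell(w^\dag,x_l)$ and $\ell(w^\dag,x_r)$ by their linear parts up to residuals exponentially small in $D$. Inside $L_0$ these linear parts cancel all $u^\dag$-dependence, producing $L_0(w^\dag)-L_0^\st=(1-p)\cdot O(e^{-\Omega(D)})$; inside $\eps Q$ they give the net savings $\eps Q(w^\dag)\le -\eps(1-p)u^\dag/\sqrt{2}+\eps(1-p)\cdot O(e^{-\Omega(D)})$. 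Since the choice of $p$ forces $(1-p)=\Theta(1/D)$ and $u^\dag=\sqrt{0.19}\,D$, the leading savings is the absolute constant $\sqrt{0.19}/(1+e^{0.9})\approx 0.13$ times $\eps$, which exceeds $\eps/20$ with room to spare.

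The main technical obstacle is the quantitative bookkeeping of the three residuals $\log(1+e^{-z})\le e^{-z}$ at $z=v_0/\sqrt{2}$ and $z=(v_0\pm u^\dag)/\sqrt{2}$: one must show that the net exponentially small error is dominated by the $(0.13-1/20)\eps$ slack throughout the allowed range of $\eps$. Here the assumption $\eps\ge \Omega(e^{-D})$ earns its keep, guaranteeing that the exponential errors are negligible compared to $\eps$, while the complementary bound $\eps\le 1/(10D)$ keeps $w^\dag$ genuinely inside the dark region so that the linear approximation is accurate. Everything else---the decomposition, the nonnegativity of $Q$ on $w[1]\le 0$, and the placement of the minimizer of $L_0$ at $(0,0.9D)$ by the design of $p$---is immediate.
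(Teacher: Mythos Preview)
Your proposal is correct and follows essentially the same route as the paper: the same decomposition $L_\chi=L_0+\chi\eps Q$, the same observation that $Q\ge 0$ on $\{w[1]\le 0\}$, the same identification of the $L_0$-minimizer at $(0,0.9D)$ via the engineered value of $p$, and the same strategy of exhibiting a single test point at which $L_+$ dips below $L_0^\st-\eps/20$. The only differences are cosmetic: the paper uses the interior test point $w_a=(0.3D,0.9D)$ rather than your boundary point $(\sqrt{0.19}\,D,0.9D)$ (this changes only the implicit constant in the $\Omega(e^{-D})$ hypothesis, since your smallest linearization exponent becomes $(0.9-\sqrt{0.19})D/\sqrt{2}\approx 0.33D$ instead of the paper's $0.6D/\sqrt{2}\approx 0.42D$), and the upper bound $\eps\le 1/(10D)$ in fact plays no role in this lemma---it is used only later, in the proof of \cref{Theorem:2DimLowerBound}.
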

Here we only give a sketch of the proof; for the complete proof, refer to \cref{Section:Proof:Lemma:TwoAdversaries2Dim}.
\begin{proof}[Proof (sketch)]
Let $L_0$ be the unperturbed ($\eps=0$) version of $L_{+},L_{-}$, i.e.,
$$
	L_0(w) 
	~=~ p \ell(w, x_0)+\frac{1-p}{2}\ell(w,x_l)+ \frac{1-p}{2}\ell(w,x_r) ~.
$$ 
Note that $L_{0}$ is constructed such that its minima is attained at $w_0=(0,0.9D)$, which belongs to the shaded area in \cref{fig:LowerBound2Dim}.
Thus, in the neighborhood of this minima both $\ell(w,x_l),\ell(w,x_r)$ are approximately linear.
Using linear approximations of $\ell(w,x_l),\ell(w,x_r)$ around $w_0$, we show that the value of $L_{+}$ at $w_a = (0.3D,0.9D)$  is smaller by $\eps/20$ than 
the minimal value of $L_{0}$, hence
\begin{align}\label{equation:proofsketch:Lowerbound1}
	 \min_{w^*\in \W}L_{+}(w^*)
	 ~\le~ L_{+}(w_a)
	 ~\le~ L_{0}(w_0) - \eps/20 ~.
\end{align}
Moreover, $L_{+}$ is shown to be the sum of $L_{0}$ and a function which is positive whenever $w[1] \le 0$, thus
\begin{align}\label{equation:proofsketch:Lowerbound2}
	L_{+}(w) ~\ge~ L_0(w) ~, 
	\quad \forall ~ w ~:~ w[1] \le 0 ~.
\end{align}  
Combining \cref{equation:proofsketch:Lowerbound1,equation:proofsketch:Lowerbound2} we get
 \begin{align*}
	 L_{+}(w) -\min_{w^*\in\W}L_{+}(w^*)
	 ~\ge~ L_0(w) - \big(L_{0}(w_0) - \eps/20 \big)
	 ~\ge~ \eps/20~, 
	 & \qquad \forall ~w ~:~ w[1] \le 0 ~,
\end{align*}
where the last inequality follows from $w_0$ being the minimizer of $L_0(w)$.
A similar argument shows that for predictors $w$ such that $w[1]\geq0$, it holds that $L_{-}(w)-\min_{w^*\in \W}L_{-}(w^*) \ge \eps/20$. 
\end{proof}
For the proof of \cref{Theorem:2DimLowerBound} we require a lemma that lower-bounds the minimal number of samples needed in order to distinguish between the distributions $\D_{+},\D_{-}$ defined in \cref{fig:Adversary2Diml}. 
To this end, we use the following modified version of \cref{thm:coin}.
\begin{lemma} \label{Lemma:Coin3}
Let  $p \in (0,1/2]$. 
Consider a distribution supported on three atoms with probabilities $\{q_0,(1-q_0)(p+\chi \eps),(1-q_0)(1-p-\chi \eps)\}$, with $\chi$ being either $0$ or $1$.
Any algorithm that identifies the distribution correctly with probability at least $3/4$, needs no less than $p/16(1-q_0)\eps^2$ samples.
\end{lemma}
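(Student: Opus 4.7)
The plan is to reduce to \cref{thm:coin} by observing that a single sample from the three-atom distribution carries exactly $(1-q_0)$ times the information of a single toss of the two-biased coin. Concretely, I would first compute the KL divergence between the two product distributions $P_0^T$ and $P_1^T$ corresponding to $\chi=0$ and $\chi=1$, which by independence equals $T \cdot \KL(P_0 \| P_1)$.

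Since the first atom has probability $q_0$ under both distributions, it contributes nothing to $\KL(P_0 \| P_1)$. The remaining two atoms contribute exactly $(1-q_0)$ times the KL divergence between $\mathrm{Bern}(p)$ and $\mathrm{Bern}(p+\eps)$, yielding $\KL(P_0^T \| P_1^T) = (1-q_0)\,T \cdot \KL\big(\mathrm{Bern}(p) \,\big\|\, \mathrm{Bern}(p+\eps)\big)$. In other words, the KL divergence accumulated over $T$ three-atom samples is identical to that accumulated over $(1-q_0)T$ two-coin tosses.

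Next I would invoke the same information-theoretic bound that underlies \cref{thm:coin} (typically a Le Cam/Pinsker argument relating total variation to KL): distinguishing two distributions with success probability at least $3/4$ requires their KL divergence to exceed a universal constant, and the threshold $p/(16\eps^2)$ in \cref{thm:coin} arises precisely from equating that constant with $T \cdot \KL(\mathrm{Bern}(p) \| \mathrm{Bern}(p+\eps))$. Applying the identical argument here but with effective sample size $(1-q_0)T$ in place of $T$ immediately gives $(1-q_0)T \ge p/(16\eps^2)$, i.e. $T \ge p/\bigl(16(1-q_0)\eps^2\bigr)$, as required.

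The main (mild) obstacle is confirming that the proof of \cref{thm:coin} really does interact with the sample count only through the product $T \cdot \KL(\text{single sample})$. If it is carried out via a pure KL/TV argument this is automatic; if it is instead phrased through a direct estimator or binomial-concentration argument, one has to rewrite that argument in terms of the conditional non-$q_0$ samples (of which there are $\mathrm{Binomial}(T,1-q_0)$ in number), but the underlying calculation is the same — conditioned on landing outside the uninformative $a_0$ atom, the two-coin problem is recovered exactly.
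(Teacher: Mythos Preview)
Your proposal is correct and matches the paper's approach. The paper does not spell out a separate proof but simply says \cref{Lemma:Coin3} ``can be proved similarly to \cref{thm:coin}'', and the proof of \cref{thm:coin} in the appendix is exactly the KL/Pinsker argument you describe; your key observation that the first atom contributes zero to the KL divergence, so that $\KL(P_1\|P_0)=(1-q_0)\,\KL(\mathrm{Bern}(p+\eps)\|\mathrm{Bern}(p))$, is precisely the one modification needed, and it answers your own ``mild obstacle'' since the paper's proof interacts with the sample count only through $m\cdot\KL$.
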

\cref{Lemma:Coin3} can be proved similarly to \cref{thm:coin} (see \cref{appendixA:InformationTheoretic}). 
We are now ready to prove \cref{Theorem:2DimLowerBound}. 
\begin{proof}[Proof of \cref{Theorem:2DimLowerBound}]
Consider some algorithm $\A$; we will show that if $\A$ observes $T$ samples from a distribution $\D$ which is either $\D_{+}$ or $\D_{-}$, then the expected excess loss $\tilde{\epsilon}$ that $\A$ can guarantee is lower bounded by $\Omega(\sqrt{D/T})$.

The excess loss is non negative; therefore 
if $\mathcal{A}$ guarantees an expected excess loss smaller than $\tilde{\epsilon}=\eps/80$, then by Markov's inequality 
it achieves an excess loss smaller than  $\eps/20$, w.p. $\geq 3/4$.
 Denoting by  $\wbar_T$ the predictor that $\A$ outputs after $T$ samples, then according to \cref{Lemma:TwoAdversaries}, attaining an excess loss smaller than $\eps/20$ on distribution $\D_{+}$(respectively $\D_{-}$) implies $\wbar_T[1]>0$ (respectively $\wbar_T[1]<0$).
 
Since $\A$ achieves an excess loss smaller than $\eps/20$ w.p. $\geq 3/4$ for any $\D$ among $\D_{+},\D_{-}$ we can use its output to identify the right distribution w.p. $\geq 3/4$. 
This can be done as follows:
\begin{align*}
	\text{if    }\;  \wbar_T[1]\geq 0, ~~ &\text{return ``$\D_{+}$" ~;}\\
	\text{if    }\;  \wbar_T[1] <     0, ~~ &\text{return ``$\D_{-}$" ~.}
\end{align*}
According to \cref{Lemma:Coin3}, distinguishing between these two distributions w.p.$\geq3/4$ requires that the number of observations $T$ to be upper bounded as follows:
$$
	T 
	~\ge~ \frac{0.5(1-\epsilon)}{16(1-p )(2\eps)^2}
	~\ge~ \frac{D}{256} \frac{1}{\eps^2} ~,
$$ 
We used $0.5(1-\epsilon)$ as a lower bound on the bias of distribution $\D_{-}$ conditioned that
the instance $x_0$ was not chosen; since $\epsilon\leq 1/10D$, $D\geq2$ it follows that $0.5(1-\epsilon)\geq 0.25$.
We also used $2\eps$ as the bias between the distributions  $\D_{+}$ and $\D_{-}$ conditioned that
the label $x_0$ was not chosen. Finally we used $1-p  \leq 1/D$. 
The above inequality together with $\tilde{\epsilon} = \epsilon/80$ yields a lower bound of $\frac{1}{1300}\sqrt{{D}/{T}}$ on the expected excess loss.
\end{proof}

\subsection{Lower Bounds for Online Optimization}\label{section:LowerBoundsOnline}
In \cref{section:LowerBounds} we  proved two lower bounds for the convergence rate of stochastic logistic regression. 
Standard online-to-batch conversion \citep{Cesa04} shows that any online algorithm attaining a regret of $R(T)$  can be used to attain a convergence rate of $R(T)/T$ for stochastic optimization. 
Hence, the lower bounds stated in \cref{Theorem:1DimLowerBound,Theorem:2DimLowerBound} imply the following:
\begin{corollary}\label{Corollary:1DimLowerBound}
Consider the one dimensional online logistic regression setting with  $T = \O(e^{D})$. 
For any algorithm $\A$ there exists a sequence of loss functions such that $\A$ suffers a regret of at least $\Omega(D^{2/3}T^{1/3})$.
\end{corollary}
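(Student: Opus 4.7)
The plan is to obtain \cref{Corollary:1DimLowerBound} from \cref{Theorem:1DimLowerBound} via the standard online-to-batch reduction referenced in the paragraph preceding the corollary. The point is that any online algorithm with regret $R(T)$ can be converted, by drawing iid samples and averaging the iterates, into a stochastic optimization algorithm with expected excess loss at most $R(T)/T$; so a regret upper bound on the online side yields a convergence-rate upper bound on the stochastic side, and conversely our one-dimensional stochastic lower bound lifts to an online lower bound.

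Concretely, I would argue by contradiction. Suppose there is an online algorithm $\A$ which, on every sequence of logistic losses with $\|x_t\|\le 1$ and $\W=[-D,D]$, achieves regret $R(T) = o(D^{2/3}T^{1/3})$. Given the stochastic distribution $\D\in\{\D_{+},\D_{-}\}$ from \cref{fig:Adversary1Diml}, simulate the online protocol by drawing $x_1,\dots,x_T$ iid from $\D$, feeding the loss $\ell(\cdot,x_t)$ to $\A$ at round $t$, and collecting the iterates $w_1,\dots,w_T$. Since the distribution is a valid (oblivious) adversary strategy, the regret guarantee of $\A$ applies. Return the averaged predictor $\wbar_T = \tfrac{1}{T}\sum_{t=1}^{T} w_t \in \W$.

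Now take expectations. Because $w_t$ depends only on $x_1,\dots,x_{t-1}$, we have $\E[\ell(w_t,x_t)] = \E[L(w_t)]$; and by Jensen,
\[
  \E\Bigl[\min_{w^*\in\W}\sum_{t=1}^{T}\ell(w^*,x_t)\Bigr] \;\le\; \min_{w^*\in\W}\sum_{t=1}^{T}\E[\ell(w^*,x_t)] \;=\; T\cdot L^*,
\]
where $L^*=\min_{w^*\in\W}L(w^*)$. Combining with the regret bound,
\[
  \sum_{t=1}^{T}\E[L(w_t)] - T\,L^* \;\le\; R(T),
\]
and convexity of $\ell(\cdot,x)$ in $w$ gives $L(\wbar_T) \le \tfrac{1}{T}\sum_t L(w_t)$, so $\E[L(\wbar_T)] - L^* \le R(T)/T$. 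This produces, for either of the adversarial distributions of \cref{Theorem:1DimLowerBound}, a stochastic optimization algorithm with expected excess loss $o(D^{2/3}/T^{2/3})$, contradicting that theorem (the assumption $T=\O(e^D)$ in the corollary matches exactly the regime in which the stochastic lower bound is valid). Hence $R(T) = \Omega(D^{2/3}T^{1/3})$.

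I do not expect a genuine obstacle: the reduction is routine and the only points to be careful about are (i) that the iid stochastic adversary is a legitimate strategy in the online model of Section~2.2, (ii) that $\wbar_T\in\W$ by convexity of $\W$, and (iii) that the range of $T$ in the corollary is inherited from the range in \cref{Theorem:1DimLowerBound}.
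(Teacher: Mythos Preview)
Your argument is correct and follows exactly the route the paper takes: the paper simply invokes the standard online-to-batch conversion of \cite{Cesa04} to transfer the stochastic lower bound of \cref{Theorem:1DimLowerBound} to the online setting, and you have written out the details of that reduction. There is nothing to add.
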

\begin{corollary}\label{Corollary:2DimLowerBound}
Consider the multidimensional online logistic regression setting with $T = \O(e^{D})$, $D\geq 2$.
For any algorithm $\A$ there exists a sequence of loss functions such that $\A$ suffers a regret of at least $\Omega(\sqrt{DT})$.
\end{corollary}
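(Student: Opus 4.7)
The plan is to derive this corollary directly from Theorem~\ref{Theorem:2DimLowerBound} by a standard online-to-batch reduction, exactly as suggested by the paragraph preceding the statement. Suppose toward contradiction that there is an online algorithm $\A$ which, against every sequence $x_1,\ldots,x_T$ in the unit ball of $\reals^n$ ($n\ge 2$) and for every $T = O(e^D)$, guarantees a regret bound $R(T) = o(\sqrt{DT})$. I will turn $\A$ into a stochastic optimization algorithm and contradict the $\Omega(\sqrt{D/T})$ lower bound.

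The reduction goes as follows. Given a distribution $\D$ on the unit ball of $\reals^n$, draw $x_1,\ldots,x_T$ i.i.d.\ from~$\D$ and feed them sequentially to $\A$, obtaining predictors $w_1,\ldots,w_T \in \W$. Let the output be the uniform average $\wbar_T = \tfrac{1}{T}\sum_{t=1}^T w_t$. By convexity of $\ell(\cdot, x)$ in~$w$ and Jensen's inequality,
\begin{align*}
	L(\wbar_T)
	~\le~ \tfrac{1}{T} \sum_{t=1}^T L(w_t) ~.
\end{align*}
Taking expectations, for any fixed $w^\st \in \W$,
\begin{align*}
	\E[L(\wbar_T)] - L(w^\st)
	~\le~ \tfrac{1}{T}\, \E\!\left[ \sum_{t=1}^T \ell(w_t, x_t) - \sum_{t=1}^T \ell(w^\st, x_t) \right]
	~\le~ \tfrac{1}{T}\, R(T) ~,
\end{align*}
where the first inequality uses that $w_t$ depends only on $x_1,\ldots,x_{t-1}$ so that $\E[\ell(w_t,x_t)] = \E[L(w_t)]$, and the second uses the assumed regret bound. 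Choosing $w^\st$ to be the minimizer of $L$ over $\W$ yields that the expected excess loss of $\wbar_T$ is at most $R(T)/T = o(\sqrt{D/T})$.

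This contradicts Theorem~\ref{Theorem:2DimLowerBound}, which provides, for $D \ge 2$ and $T = O(e^D)$, a distribution $\D$ forcing expected excess loss $\Omega(\sqrt{D/T})$ on any stochastic algorithm. Hence no online algorithm can do better than $\Omega(\sqrt{DT})$ regret in the stated regime. Because the construction of Theorem~\ref{Theorem:2DimLowerBound} already lives in the unit ball of $\reals^2$ (and trivially embeds into any $\reals^n$ with $n \ge 2$), the same distribution yields, via the online-to-batch step above, a sequence of logistic losses on which $\A$ must incur regret $\Omega(\sqrt{DT})$.

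There is no real obstacle here; the only thing to watch is the standard online-to-batch bookkeeping (using that $w_t$ is measurable with respect to the past samples so that one can take expectations inside the sum) and noting that the lower bound construction of Theorem~\ref{Theorem:2DimLowerBound} satisfies the online setting's boundedness assumption $\|x_t\| \le 1$. Both are immediate from the setup in Section~\ref{section:Preliminaries} and Figure~\ref{fig:Adversary2Diml}.
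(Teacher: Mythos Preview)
Your proposal is correct and follows essentially the same approach as the paper: the paper simply invokes the standard online-to-batch conversion of \cite{Cesa04} to transfer the stochastic lower bound of \cref{Theorem:2DimLowerBound} to the online setting, and you have spelled out precisely this reduction (averaging the online iterates, using convexity and the measurability of $w_t$ with respect to the past). The only cosmetic difference is that you frame it as a contradiction, whereas the paper states the contrapositive directly; the content is identical.
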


\section{Upper Bound for One-dimensional Regression} \label{section:UpperBounds}
In this section we consider online logistic regression in one dimension; here an adversary  chooses instances $x_t \in [-1, 1]$, then
 a learner chooses predictors $w_t \in \W = \{w \in \reals : |w| \leq D\}$, and suffers a logistic loss  $\ell(w_t,x_t)=\log(1+e^{x_t w_t})$.
We provide an upper bound of $\O(T^{1/3})$ for logistic online regression in one dimension, thus showing that the lower bound found in 
\cref{Theorem:1DimLowerBound} is tight. 
Formally, we prove:
\begin{theorem} \label{Theorem:LogitFTRL}
Consider the one dimensional online regression with logistic loss. Then a player that chooses predictors $w_t\in \W$ according 
to \cref{algorithm:LogitFTRL} with $\eta=T^{-1/3}$ and $D\geq2$, achieves the  following guarantee: 
$$
\text{Regret}_T
~=~ \sum_{t=1}^T \log(1+e^{x_t w_t}) - \min_{w\in \W}\sum_{t=1}^T \log(1+e^{x_t w}) 
	~=~ \O(D^3 \, T^{1/3}) ~.
$$
\end{theorem}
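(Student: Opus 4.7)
The plan is to interpret \cref{algorithm:LogitFTRL} as a follow-the-regularized-leader procedure played against \emph{surrogate} losses $\tilde{\ell}_t$ that globally approximate each observed logistic loss $\ell(\cdot,x_t)$ over $\W=[-D,D]$, with quadratic regularizer $\Phi(w)=w^2/(2\eta)$. The backbone of the argument is the standard decomposition
\begin{align*}
\text{Regret}_T
 = \sum_{t=1}^{T}\bigl[\ell(w_t,x_t)-\tilde{\ell}_t(w_t)\bigr]
 + \sum_{t=1}^{T}\bigl[\tilde{\ell}_t(w_t)-\tilde{\ell}_t(w^*)\bigr]
 + \sum_{t=1}^{T}\bigl[\tilde{\ell}_t(w^*)-\ell(w^*,x_t)\bigr],
\end{align*}
which reduces the problem to two pieces: (i) a uniform pointwise bound $\sup_{w\in\W}|\tilde{\ell}_t(w)-\ell(w,x_t)|\le\alpha$ that controls the outer sums by $2\alpha T$, and (ii) the FTRL regret against the surrogates themselves.

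For (ii), I would invoke the classical be-the-leader / stability argument. The $1/\eta$-strong convexity of $\Phi$ gives
\begin{align*}
\sum_{t=1}^{T}\bigl[\tilde{\ell}_t(w_t)-\tilde{\ell}_t(w^*)\bigr]
\;\le\; \Phi(w^*) + \sum_{t=1}^{T}\bigl[\tilde{\ell}_t(w_t)-\tilde{\ell}_t(w_{t+1})\bigr],
\end{align*}
and stability of the FTRL iterates (of order $\eta|g_t|$ for any subgradient $g_t$ of $\tilde{\ell}_t$ at $w_t$) then bounds this by $O(D^2/\eta)+\eta\sum_t g_t^2$. Because $\ell(\cdot,x_t)$ is $1$-Lipschitz on $\W$ and the surrogate is constructed to inherit this, the gradient-squared sum is $O(T)$, yielding an FTRL-on-surrogates contribution of $O(D^2/\eta+\eta T)$.

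The remaining and decisive piece is the design and analysis of the global surrogate, where the distinction emphasized in the paper — exploiting the \emph{global} shape of $\ell$ rather than a local Taylor expansion — enters. A natural candidate is a piecewise-convex function that agrees with the two asymptotes of $\ell$ (linear $x_tw$ for $x_tw\gg 0$ and constant $0$ for $x_tw\ll 0$), glued together by a smooth transition whose width $\delta$ is tuned against $\eta$; the pointwise error then scales as $\alpha=O(\delta)+O(e^{-\delta})$, which, unlike any local quadratic model, remains uniformly small over the full interval $[-D,D]$. Plugging the contributions $O(D^2/\eta)$, $O(\eta T)$, and $2\alpha T$ into the decomposition and choosing $\eta=T^{-1/3}$ together with $\delta=\Theta(D)$ collapses to the claimed $O(D^3 T^{1/3})$ rate.

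The main obstacle is the third step: one has to design the surrogate so that the pointwise bound $\alpha$, the Lipschitz constant of $\tilde{\ell}_t$, and the per-step FTRL stability all scale jointly in $D$ and $\eta$ to meet the target after balancing. The subtlety is that sharpening $\alpha$ narrows the transition region, which in turn inflates the surrogate's curvature and its gradient magnitude near the transition; tracking this trade-off carefully (and verifying convexity of $\tilde{\ell}_t$ so that the FTRL optimization is well-posed) is where the proof requires real work, as opposed to the otherwise routine FTRL bookkeeping.
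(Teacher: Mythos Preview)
Your plan has a genuine gap at the arithmetic and conceptual core. With the standard FTRL-on-surrogates bound you quote, $O(D^2/\eta+\eta T)$, the choice $\eta=T^{-1/3}$ gives $D^2T^{1/3}+T^{2/3}$, which is $\Theta(T^{2/3})$, not $T^{1/3}$; no choice of $\eta$ does better than $\sqrt{T}$ from these two terms alone. Likewise, the approximation-error contribution $2\alpha T$ would need $\alpha=O(T^{-2/3})$ to be harmless, but your asymptote-plus-transition surrogate with $\delta=\Theta(D)$ gives $\alpha$ of constant order in $T$. So neither piece of the balance actually lands.

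The paper avoids both obstacles by a different mechanism. First, the surrogates are \emph{one-sided}: $\tilde\ell_t(w)\le \ell(w,x_t)$ everywhere on $\W$ with equality at $w_t$ (\cref{Lemma:ApproximateLoss,Lemma:SatisfyApproximateLoss}). This makes the regret on $\{\tilde\ell_t\}$ an upper bound on the true regret with no $\alpha T$ term at all. Second, the surrogates are not merely Lipschitz but carry \emph{curvature}: each $\tilde\ell_t$ is linear with slope $y_t$ on one half-line and quadratic with curvature $\beta y_t^2$ (with $\beta=1/8D$) on the other, or fully quadratic when $|x_t|\le 1/D$ or $x_tw_t\le 0$. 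The regret is then controlled not by the crude stability bound $\eta\sum_t g_t^2$ but by an adaptive step-size argument: on ``quadratic'' rounds the accumulated curvature yields an ONS-style $\sum_t (y_t^q)^2/\bigl(\sum_{\tau\le t}(y_\tau^q)^2+\eta^{-1}\bigr)=O(D\log T)$; on ``linear'' rounds the regularizer alone handles the first $T^{2/3}$ of them, and the crucial step (Equations~\eqref{Equation:GetTthird}--\eqref{Equation:GetTthird3}) shows that if more than $T^{2/3}$ linear rounds have occurred while the FTRL minimizer stays on the linear side, the sign constraint on $u_t$ forces $\sum_\tau (y_\tau^q)^2\gtrsim n_+(t)^2/(D^2 T)$, which shrinks the remaining steps enough to sum to $O(D^3T^{1/3})$. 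This interaction between the linear and quadratic pieces of the surrogate---not a uniform pointwise bound---is what beats $\sqrt{T}$, and it is absent from your outline.
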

Using standard online-to-batch conversion techniques \cite{Cesa04}, we can translate the upper bound given in the above lemma to an upper bound for stochastic optimization.
\begin{corollary} \label{Corollary:LogitFTRL}
Consider the one dimensional stochastic logistic regression setting with $D\geq2$ and a budget of $T$ samples. Then for any distribution $\D$ over instances, an algorithm that chooses predictors $w_1,\ldots,w_t \in \W$ according to \cref{algorithm:LogitFTRL} with $\eta=T^{-1/3}$ and outputs $\wbar_T = \frac{1}{T}\sum_{\tau=1}^T w_\tau$, achieves the following guarantee: 
$$
	\E[L(\wbar_T)]~ - \min_{w^{\st}  \in [-D,D] } L(w^{\st})
	~=~ \O(D^3 /T^{2/3}) ~.
$$
\end{corollary}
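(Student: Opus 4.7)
The plan is to apply the standard online-to-batch conversion of \cite{Cesa04}, feeding the i.i.d.\ sample $x_1, \ldots, x_T \sim \D$ to the online algorithm (\cref{algorithm:LogitFTRL}) as if it were produced by an adversary. Since \cref{Theorem:LogitFTRL} provides a regret bound of $O(D^3 T^{1/3})$ that holds for \emph{any} sequence of instances with $|x_t| \le 1$, it holds in particular for the random sequence drawn from $\D$. All that remains is the usual bookkeeping with expectations and a Jensen step to pass to the averaged iterate $\wbar_T$.

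Concretely, I would first take the expectation over the samples on both sides of the regret inequality. Because $w_t$ is measurable with respect to $x_1, \ldots, x_{t-1}$ while $x_t$ is drawn independently from $\D$, the tower property yields
\[
\E[\ell(w_t, x_t)] ~=~ \E\big[\E[\ell(w_t, x_t)\mid x_1,\ldots,x_{t-1}]\big] ~=~ \E[L(w_t)].
\]
For the comparator term, I would use $\E\!\left[\min_{w^* \in \W} \sum_{t=1}^T \ell(w^*, x_t)\right] \le \min_{w^* \in \W} \sum_{t=1}^T \E[\ell(w^*, x_t)] = T \min_{w^* \in \W} L(w^*)$, swapping min and expectation (Jensen applied to the concave min). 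Combining these two identities with \cref{Theorem:LogitFTRL} gives
\[
\sum_{t=1}^T \E[L(w_t)] - T \min_{w^* \in \W} L(w^*) ~\le~ O(D^3 T^{1/3}).
\]

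The final step is to pass from the average \emph{loss} $\frac{1}{T}\sum_t \E[L(w_t)]$ to the loss of the \emph{average} iterate $\wbar_T = \frac{1}{T}\sum_t w_t$. This is immediate from convexity of $L$ (a positive mixture of logistic losses): by Jensen's inequality, $L(\wbar_T) \le \frac{1}{T}\sum_{t=1}^T L(w_t)$, so $\E[L(\wbar_T)] \le \frac{1}{T}\sum_{t=1}^T \E[L(w_t)]$. Dividing the previous display by $T$ and plugging in this bound yields the claimed rate $\E[L(\wbar_T)] - \min_{w^* \in \W} L(w^*) = O(D^3/T^{2/3})$.

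There is no real obstacle here: the entire mathematical content lives in \cref{Theorem:LogitFTRL}, and the corollary is a mechanical application of the online-to-batch template. The only things worth double-checking are (i) that \cref{algorithm:LogitFTRL} is indeed a function of the past observations only, so that the conditional-expectation identity above is valid, and (ii) that the choice $\eta = T^{-1/3}$ is identical to the one used in \cref{Theorem:LogitFTRL}, so that the $O(D^3 T^{1/3})$ regret bound may be invoked verbatim.
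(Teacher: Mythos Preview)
Your proposal is correct and matches the paper's approach: the paper simply states that the corollary follows from \cref{Theorem:LogitFTRL} via standard online-to-batch conversion \citep{Cesa04}, without spelling out the details. You have in fact supplied more detail than the paper does, and your two sanity checks are fine---\cref{algorithm:LogitFTRL} computes $w_t$ from $\t{\ell}_1,\ldots,\t{\ell}_{t-1}$ only, and the $\eta=T^{-1/3}$ choice is identical.
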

%
%
Following \cite{Zinkevich03} and \cite{HazanAK07}, we approximate the losses received by the adversary,  and use the approximate losses in a follow-the-regularized-leader (FTRL) procedure in order to choose the predictors. 
\begin{figure}[h]
\centering
\subfigure[Mixed linear/quadratic approximation]{ \label{fig:ApplossesLinquad}
\begin{tikzpicture}
    \begin{axis}[
      	  width=0.4\textwidth,
	  height=0.3\textwidth,
	  scale only axis,
	  xmin=-4,xmax=4,
	  ymin=-1.5,ymax=4,
	  axis lines=middle, 
	  ticks=none, 
	  line width=1pt]
    \addplot[domain=-4:4] {ln(1+exp(x*(1-1/10)))}
    node[pos=0.1,above] {$\ell(\cdot,x_t)$};
    \addplot[domain=0:4, densely dashed] {1.95+0.77*(x-2)};
    \addplot[domain=-4:0, densely dashed] {1.95+0.77*(x-2)+0.1*x^2}
    node[pos=0.2,above] {$\tilde{\ell}_t$};
    \draw[dotted] (axis cs:2,0) node[below] {$w_t$} -- (axis cs:2,1.95);
    \end{axis}
\end{tikzpicture}
}
\hskip 0.05\textwidth
\subfigure[Quadratic approximation]{ \label{fig:ApplossesQuad}
\begin{tikzpicture}
    \begin{axis}[
	  width=0.4\textwidth,
	  height=0.3\textwidth,
	  scale only axis,
	  xmin=-4,xmax=4,
	  ymin=-1.5,ymax=4,
	  axis lines=middle, 
	  ticks=none, 
	  line width=1pt]
    \addplot[domain=-4:4] {ln(1+exp(x*(1-1/10)))}
    node[pos=0.9,above=7pt] {$\ell(\cdot,x_t)$};
    \addplot[domain=-4:4, densely dashed] {0.34+0.26*(x+1)+0.03*(x+1)^2}
    node[pos=0.9,below] {$\tilde{\ell}_t$};
    \draw[densely dotted] 
        (axis cs:-1,0) node[below] {$w_t$} 
        -- (axis cs:-1,0.3);
    \end{axis}
\end{tikzpicture}
}
\caption{Approximate losses used by \cref{algorithm:LogitFTRL}.} \label{fig:1DimUpper}
\end{figure}
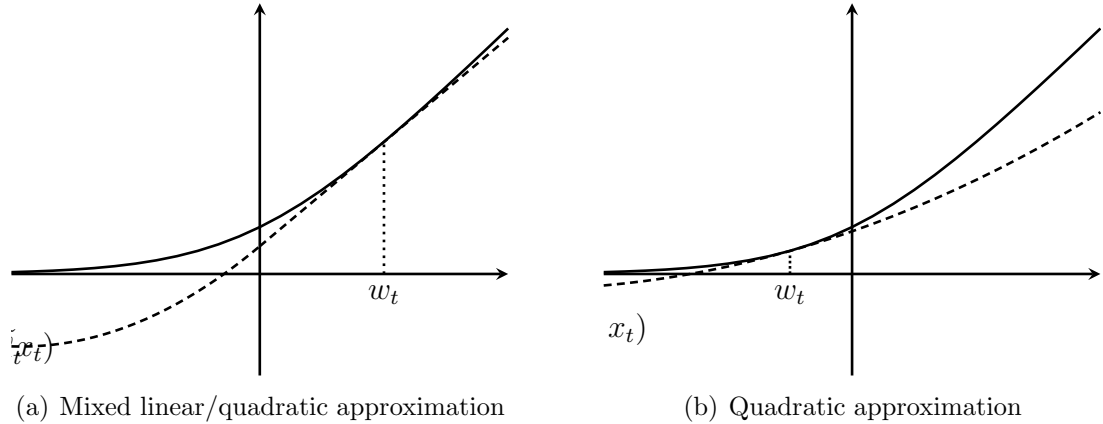

First note the following lemma due to \cite{Zinkevich03} (proof is  found in  \cite{HazanAK07}):
\begin{lemma} \label{Lemma:ApproximateLoss}
Let $\ell_1,\ldots,\ell_{T}$ be an arbitrary sequence of loss functions, and let $w_{1},\ldots,w_{T} \in\mathcal{K}$. 
Let, $\t{\ell}_1,\ldots,\t{\ell}_{T}$ be a sequence of loss function that satisfy $\t{\ell}_t(w_t) = \ell_t(w_t)$, and $\t{\ell}_t(w) \le \ell_t(w)$ for all $w\in \mathcal{K}$.
Then
$$
	\sum_{t=1}^T \ell_t(w_t) - \min_{w\in \K}\sum_{t=1}^T \ell_t(w)
	~\le~ \sum_{t=1}^T \t{\ell}_t(w_t)-\min_{w\in \K}\sum_{t=1}^T \t{\ell}_t(w) ~.
$$
\end{lemma}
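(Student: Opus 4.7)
The plan is to verify the inequality by treating the player's cumulative loss and the comparator loss separately, since each of the two hypotheses on $\tilde{\ell}_t$ is tailored to exactly one of those two terms. This gives a clean decomposition of the regret bound into two elementary inequalities that combine without any quantitative estimates.

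First I would handle the player's cumulative loss. Applying the pointwise equality $\tilde{\ell}_t(w_t) = \ell_t(w_t)$ at each round $t$, summing over $t=1,\ldots,T$, gives
\[
\sum_{t=1}^T \ell_t(w_t) \;=\; \sum_{t=1}^T \tilde{\ell}_t(w_t).
\]
Note this substitution only needs to hold at the specific iterates $w_t$, which is precisely what the first hypothesis provides.

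Next I would handle the comparator. The domination $\tilde{\ell}_t(w) \le \ell_t(w)$ holds for \emph{every} $w \in \mathcal{K}$, so summing over $t$ yields $\sum_{t=1}^T \tilde{\ell}_t(w) \le \sum_{t=1}^T \ell_t(w)$ for every $w \in \mathcal{K}$. Taking the minimum of each side over $w \in \mathcal{K}$ preserves the inequality, so
\[
\min_{w \in \mathcal{K}} \sum_{t=1}^T \tilde{\ell}_t(w) \;\le\; \min_{w \in \mathcal{K}} \sum_{t=1}^T \ell_t(w),
\]
which, after negation, gives $-\min_{w \in \mathcal{K}} \sum_t \ell_t(w) \le -\min_{w \in \mathcal{K}} \sum_t \tilde{\ell}_t(w)$.

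Finally I would add the two displayed inequalities to conclude the claim. Honestly, there is no real obstacle here: once the hypotheses are matched with the correct summand of the regret, the result drops out in two lines. The only thing to be mindful of is not to confuse ``pointwise equality at $w_t$'' with ``pointwise equality on all of $\mathcal{K}$''; the equality is only needed at the algorithm's iterates, whereas the inequality must hold everywhere so that it survives taking the minimum.
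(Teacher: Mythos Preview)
Your proof is correct and is the standard two-line argument: equality at the iterates handles the player term, pointwise domination handles the comparator term after taking the minimum. The paper does not actually supply its own proof of this lemma; it attributes the result to \cite{Zinkevich03} and refers the reader to \cite{HazanAK07} for a proof, so there is nothing further to compare.
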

Thus, the regret on the original losses is bounded by the regret of the approximate losses. For the logistic losses, $\ell(w,x_t) = \log(1+e^{x_t w})$, we define approximate losses $\t{\ell}_t$ that satisfy the conditions of the last lemma. 
Depending on $x_t,w_t $, we divide into 3 cases:
\begin{equation}\label{Equation:ApproximationLoss}
	\tilde{\ell}_t(w)=
	\begin{cases}
		a_0 + y_t w+\frac{\beta}{2}y_t^2 w^2 \ind{w \le 0}
			&\qquad \text{if $w_t\geq 0$ ~and~ $x_t \ge \frac{1}{D}$~;} \\
		a_0 + y_t w +\frac{\beta}{2} y_t^2 w^2 \ind{w \ge 0}        
			&\qquad \text{if  $w_t\leq 0$ ~and~ $x_t \le -\frac{1}{D}~;$} \\
		a_0 + y_t w+\frac{\beta}{2}y_t^2(w-w_t)^2    
			&\qquad \text{if $|x_t|\leq\frac{1}{D}$ ~or~ $x_t w_t\leq 0~,$ }
	\end{cases}
\end{equation}
where,
\begin{align*}
	y_t = \left. \frac{\partial\ell(w,x_t)}{\partial w}\right \vert_{w_t}=g_t x_t ~,\quad 
	g_t = \frac{e^{x_t w_t}}{1+e^{x_t w_t}} ~,\quad 
	\beta = 1/8D ~, \quad 
	a_0 = \log(1+e^{x_t w_t}) -g_t x_t w_t ~.
 \end{align*}
Thus, if $|x_t|\leq 1/D$ or $x_t w_t\leq 0 $, then we use a quadratic approximation, else we use a loss that changes from  linear to quadratic on $w=0$.
Note that if  the approximation loss $\tilde{\ell}_t$ is partially linear, then the magnitude of its slope $|y_t|$ is greater than $1/2D$.

The approximations are depicted in \cref{fig:1DimUpper}. In \cref{fig:ApplossesLinquad} the approximate
loss changes from linear to quadratic in $w=0$ , where in \cref{fig:ApplossesQuad} the approximate loss is quadratic everywhere.
The following technical lemma states that the losses $\{\t{\ell}_t\}$ satisfy the conditions of \cref{Lemma:ApproximateLoss}.
\begin{lemma} \label{Lemma:SatisfyApproximateLoss}
Assume that $D \ge 2$.
Let $\ell(\cdot,x_1),\ldots,\ell(\cdot,x_T)$ be a sequence of logistic loss functions and let $w_1,\ldots,w_{T} \in \W$.
The approximate losses $\t{\ell}_1,\ldots,\t{\ell}_{T}$ defined above satisfy $\t{\ell}_t(w_t) = \ell(w_t,x_t)$ and $\t{\ell}_t(w) \le \ell(w,x_t)$ for all $w\in \W$.
\end{lemma}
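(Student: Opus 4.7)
The plan is to verify the equality and the inequality parts of the lemma separately. The equality $\tilde{\ell}_t(w_t) = \ell(w_t, x_t)$ is immediate from the construction: in Case~3 the factor $(w-w_t)^2$ vanishes at $w = w_t$; in Cases~1--2 the sign hypothesis on $w_t$ ensures the indicator is zero at $w = w_t$ (or else $w_t = 0$, in which case $w_t^2 = 0$); then $a_0 + y_t w_t = \ell(w_t, x_t)$ by the very definition of $a_0$. For the inequality $\tilde{\ell}_t(w) \le \ell(w, x_t)$, I would introduce the Bregman gap $\Delta(w) \eqdef \ell(w, x_t) - a_0 - y_t w$ between the logistic loss and its tangent line at~$w_t$; convexity of $\ell(\cdot, x_t)$ gives $\Delta(w) \ge 0$, so the task reduces to dominating the additional quadratic term of $\tilde{\ell}_t$ by $\Delta(w)$ on its region of support.

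For Case~3, I would use the integral Taylor remainder
\begin{equation*}
\Delta(w) ~=~ \int_{\min(w,w_t)}^{\max(w,w_t)} |w - r|\,\ell''(r, x_t)\,dr
\end{equation*}
and lower bound $\ell''(r, x_t) = x_t^2 \sigma(x_t r)(1-\sigma(x_t r))$ along the segment. When $|x_t| \le 1/D$, the sigmoid argument stays in $[-1, 1]$, so $\ell''$ is at least a constant multiple of $x_t^2$, dominating $\beta y_t^2 \le x_t^2/(8D)$ for $D \ge 1$. When $x_t w_t \le 0$, the bound $g_t \le 1/2$ forces $y_t^2 \le x_t^2/4$; moreover $\ell''$ attains its maximum $x_t^2/4$ at $r = 0$, which lies either in the integration interval or at one of its endpoints, and restricting the integral to a neighborhood of this point delivers the required lower bound.

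Cases~1 and 2 are symmetric, so consider Case~1. On $\{w \ge 0\}$ the approximation is just the tangent line and convexity suffices. On $\{w \le 0\}$ the hypotheses $g_t \ge 1/2$ and $x_t \ge 1/D$ give $y_t \ge 1/(2D)$; I would verify $\Delta(w) \ge \tfrac{\beta}{2} y_t^2 w^2$ on $[-D, 0]$ by handling the two endpoints and then interpolating. At $w = 0$ the right side vanishes and $\Delta(0) \ge 0$ is the convexity bound. At $w = -D$ I would combine $\ell(-D, x_t) \ge 0$ with the monotonicity bound $a_0 \le \log 2$ (which follows from $da_0/du_t = -u_t\,\sigma(u_t)(1-\sigma(u_t)) \le 0$ for $u_t = x_t w_t \ge 0$, so $a_0$ is maximized at $u_t = 0$), retaining the explicit term $\log(1 + e^{-x_t D})$, to conclude that $\Delta(-D)$ dominates the quadratic value $y_t^2 D/16$ whenever $D \ge 2$. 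For intermediate $w \in (-D, 0)$ I would split the interval into a near-zero segment where a Taylor expansion around $w = 0$ using $\ell''(0, x_t) = x_t^2/4$ handles the quadratic, and a far segment where the linear bound from $\ell \ge 0$ dominates.

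The main obstacle is precisely this ``far from $w_t$'' regime in Cases~1--2 and in sub-case~3b, where $|x_t w|$ can be as large as~$D$ and $\ell''(w, x_t)$ becomes exponentially small. A pure second-derivative lower bound on $\Delta(w)$ must fail there, so the argument has to interlock a local second-order bound near $w_t$ (or near $w = 0$) with a linear bound derived from $\ell \ge 0$ far away. The calibration $\beta = 1/(8D)$ together with the hypothesis $D \ge 2$ appears to be tuned precisely so that these two regimes glue together over the whole of~$\W$.
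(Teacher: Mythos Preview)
Your approach differs from the paper's and contains a concrete gap in sub-case~3b ($x_t w_t \le 0$). You assert that $r=0$ ``lies either in the integration interval or at one of its endpoints'', but this fails whenever $w$ and $w_t$ have the same sign: take $w_t=1$, $w=D$, $x_t=-1$ (so $x_t w_t<0$ and $|x_t|>1/D$); the interval $[1,D]$ does not contain~$0$, and on it $\ell''(r,x_t)=\sigma(-r)(1-\sigma(-r))$ decays like $e^{-r}$, so no neighborhood-of-zero restriction is available and the integral-remainder bound collapses. Your final paragraph acknowledges exactly this regime as the obstacle, but your Case~3 writeup does not actually cover it, and the interlocking scheme you sketch is elaborated only for Cases~1--2.

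The paper avoids both difficulties by two devices you do not use. First, for the purely quadratic approximation (your Case~3) it invokes the exp-concavity lemma of Hazan--Agarwal--Kale: since $\ell''(w,x_t)/(\ell'(w,x_t))^2=e^{-x_t w}$, one has a uniform lower bound $\alpha$ on this ratio over the relevant region ($\alpha=e^{-1}$ when $|x_t|\le 1/D$; $\alpha=1/2$ on the half-line where $x_t w\le 0$), and the lemma delivers directly $\ell(w)\ge \ell(w_t)+y_t(w-w_t)+\tfrac{\beta}{2}y_t^2(w-w_t)^2$ with $\beta=\tfrac{1}{2}\min\{1/(4D),\alpha\}=1/(8D)$. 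The point is that exp-concavity compares $\ell''$ to $(\ell')^2$ \emph{pointwise} rather than to the fixed constant $y_t^2$, so the exponential decay of $\ell''$ is absorbed. Second, for the half-line opposite to $w_t$ (the quadratic region in your Cases~1--2, and the opposite-sign half in Case~3b), the paper introduces the auxiliary approximation $\tilde{\ell}_t^{(0)}(w)=\log 2+\tfrac{x_t}{2}w+\tfrac{\beta}{2}x_t^2 w^2$ centered at the origin, proves once that $\tilde{\ell}_t^{(0)}\le\ell(\cdot,x_t)$ on all of $[-D,D]$ by a direct calculation, and then shows $\tilde{\ell}_t\le\tilde{\ell}_t^{(0)}$ on the relevant half-line via three elementary monotonicity facts: $\tilde{\ell}_t(0)\le\ell(0,x_t)$, $|\ell'(w_t,x_t)|\ge|\ell'(0,x_t)|$, and $|y_t|\le|x_t|$. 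This two-step comparison replaces your endpoint-plus-interpolation programme entirely and requires no gluing of regimes.
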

\cref{Lemma:SatisfyApproximateLoss} is proved in \cref{section:proof:Lemma:SatisfyApproximateLoss}.
We are now ready to describe our algorithm that 
obtains a regret of $\O(D^3T^{1/3})$ for one-dimensional online regression, given in \cref{algorithm:LogitFTRL}.
\begin{algorithm}[H]
\caption{FTRL for logistic losses}
\label{algorithm:LogitFTRL}
\begin{algorithmic}
	\STATE \textbf{Input}: Learning rate $\eta>0$,  diameter $D$ 
	\STATE let $R(w) = \frac{1}{16D}  w^2$
	\FOR{$t=1,2 \ldots T$ }
		\STATE set $w_t=\argmin_{w \in [-D,D]} \big\{ \sum_{\tau=1}^{t-1} \t{\ell}_\tau(w) + \frac{1}{\eta} R(w) \big\}$ 
		\STATE observe $x_t\in[-1,1]$ and suffer loss $\ell(w_t,x_t)=\log(1+e^{x_t w_t})$
		\STATE compute $\t{\ell}_t$ according to \cref{Equation:ApproximationLoss}
	\ENDFOR
\end{algorithmic}
\end{algorithm}
We conclude with a proof sketch of \cref{Theorem:LogitFTRL}; the complete proof is deferred to \cref{section:proof:LogitFTRL}. 
%
\begin{proof}[Proof of \cref{Theorem:LogitFTRL} (sketch)]
First we show that the regret of \cref{algorithm:LogitFTRL} is upper bounded by the sum of differences $\sum_{t=1}^T \t{\ell}'_t(w_t)(w_t-w_{t+1})$,  
and then divide the analysis into two cases. In the first case we show that the accumulated regret in rounds where  $\t{\ell}_t$ is quadratic around $w_t$  is upper bounded by  $\O(D\log{T})$. The second case analyses rounds in which $\t{\ell}_t$ is linear around $w_t$; due to the regularization, in the first such  $T^{2/3}$ rounds our regret is bounded by $\O(T^{1/3})$ and if the number of such rounds is greater than $T^{2/3}$ we show that the quadratic part of the accumulated losses is large enough so the above sum of differences is smaller than $\O(D^3T^{1/3})$.
Since the approximations $\t{\ell}_t$ may change from linear to quadratic in $w=0$,  our analysis splits into two cases:  the case where consecutive predictors $w_t,w_{t+1}$ have the same sign, and the case where they have opposite signs.
\end{proof}

\section{Proofs} \label{Section:Proofs}

\subsection{Proof of \cref{Lemma:TwoAdversaries1Dim}}\label{Section:Proof:Lemma:TwoAdversaries1Dim}

\begin{proof}
We assume that the following holds:
$$\Omega(e^{-D})=40e^{-0.45D}\leq \eps \leq \frac{1}{25}~.$$
In the proof we use the following:
$$\theta D \leq 0.2;\qquad   1-\frac{\theta}{2} \geq 0.9~,$$
the first follows since: $\theta D= \sqrt{\eps }\leq \sqrt{\frac{1}{25}}= 0.2$, combing the latter with $D\geq 1$ we get $1-\frac{\theta}{2} \geq 0.9$.
Next we prove the lemma in three steps:
\paragraph{Step 1: Linear/quadratic approximation in $\left[D/2,D\right]$.}
We show that for $w\in \left[D/2,D\right]$, the logistic losses of the instances  $(1-\frac{\theta}{2}),-\theta$ are linear/quadratic, up to an additive term of $\Delta \leq \eps/40$:
 \begin{align} 
 & \abs{ \ell(w,1-\frac{\theta}{2}) - (1-\frac{\theta}{2})w }
 =\log(1+e^{-(1-\frac{\theta}{2})w})\leq  e^{-(1-\frac{\theta}{2})w}\leq e^{-0.45D}\leq \Delta, \qquad \forall w\in [{D}/{2},D] \label{Equation:Llll}   \\ 
 &\abs{ \ell(w,-\theta) - \left(\log{2}-\frac{\theta}{2}w+\frac{(\theta w)^2}{8}\right) }
 \leq \max_{\bar{w}\in [-D,D]} \frac{(\theta \bar{w})^4}{192}\leq  \label{Equation:LinearizeLogit2Dim1}
  \frac{(\theta D)^4}{192}\leq \Delta, \qquad \forall w\in [-D,D]
   \end{align}
 recalling $\ell(w,x) = \log(1+e^{xw})$, in the first equality of \cref{Equation:Llll} we used, $\log(1+e^z)=z+\log(1+e^{-z})$, next we used $\log(1+z)\leq z$ , finally we used $w\geq D/2$ and $(1-\frac{\theta}{2})\geq 0.9$. 
In \cref{Equation:LinearizeLogit2Dim1} we used the second order taylor approximation of the loss around $0$, and the RHS of the second inequality is an upper bound to the error of this approximation. 
We define $\Delta =  \max\{e^{-0.45D},\frac{(\theta D)^4}{192} \}$; 
using $\theta ={\frac{\sqrt{\eps}}{D}}$,  $40e^{-0.45D}\leq \eps\leq \frac{1}{25}$ and $D\geq 1$ we can bound:
 $$ \Delta \leq {\eps }/{40}~.$$
 
\paragraph{Step 2: proving the lemma for  $w \in [D/2,D]$.}
Recall the notation $L_{+}(w), L_{-}(w)$ for the expected losses according to $\D_{+},\D_{-}$; using \cref{Equation:Llll,Equation:LinearizeLogit2Dim1}, we can write:
\begin{align*}
L_{+}(w)&= 
\left(\frac{\theta}{2}+\frac{\eps}{D}\right)\left(1-\frac{\theta}{2} \right)w+ \left(1-\frac{\theta}{2}-\frac{\eps}{D}\right)\left(\log{2}-\frac{\theta}{2}w+\frac{(\theta w)^2}{8}\right)\pm \Delta \\
& = \frac{\eps}{D} w + \left(1-\frac{\theta}{2}-\frac{\eps}{D}\right)\log{2} + \left(1-\frac{\theta}{2}-\frac{\eps}{D}\right)\frac{(\theta w)^2}{8}\pm \Delta, \qquad  \forall w\in \left[{D}/{2},D\right]~.
 \end{align*}

Using the latter expression for $L_{+}$ we can bound the excess loss for $w\in [D/2,D]$ as follows:
\begin{align*}
	L_{+}(w)-\min_{w^*\in \W}L_{+}(w^*)
	&\geq L_{+}(w)-L_{+}(D/2)  \\
	&\geq \frac{\eps}{D}\left(w-\frac{D}{2}\right) + \frac{\theta^2}{8}\left(1-\frac{\theta}{2}-\frac{\eps}{D}\right)\left(w^2 - \frac{D^2}{4}\right)-2\Delta \\
	&\geq \frac{\eps}{D}\left(w-\frac{D}{2}\right) + \frac{\theta^2}{10}\left(w^2 - \frac{D^2}{4}\right)-2\Delta~,
\end{align*}
where in the last inequality we used $\theta/2\leq 0.1$ and $\eps/D\leq 1/25$.
Hence, for $w\geq 3D/4$, we have
\begin{align*}
	L_{+}(w)-\min_{w^*}L_{+}(w^*) 
	~\geq~ \frac{\eps}{4} + \frac{\theta^2}{10}\frac{5D^2}{16}-2 \Delta
	~\geq~ \frac{\eps }{20} ~,  
\end{align*}
where we used $\Delta \leq {\eps }/{40}$.
 
Similarly to $L_{+}$ we can show that
\begin{align*}
	L_{-}(w) =  -\frac{\eps}{D} w + \left(1-\frac{\theta}{2}+\frac{\eps}{D}\right)\log{2} + \left(1-\frac{\theta}{2}+\frac{\eps}{D}\right)\frac{(\theta w)^2}{8}\pm \Delta ~, 
	\qquad  \forall w\in\left[{D}/{2},D\right] ~.
\end{align*}
 Using the latter expression for $L_{-}$ we can bound the excess loss for $w\in [D/2,D]$ as follows:
\begin{align*}
	L_{-}(w)-\min_{w^*\in \W}L_{-}(w^*)
	&\geq L_{-}(w)-L_{-}(D)  \\
	&\geq -\frac{\eps}{D}(w-D) + \frac{\theta^2}{8}\left(1-\frac{\theta}{2}-\frac{\eps}{D}\right)(w^2 - D^2)-2\Delta \\
	&\geq -\frac{\eps}{D}(w-D) + \frac{\theta^2}{8}\left(\frac{D^2}{4} - D^2\right)-2\Delta ~.
\end{align*}
Hence, for $w\in [\frac{D}{2},\frac{3D}{4}]$, we have:
\begin{align} \label{Equation:PartialArea1Dim}
	L_{-}(w)-\min_{w^*}L_{-}(w^*) 
	 ~\geq~ \frac{\eps}{4} - \frac{\theta^2}{8}\frac{3D^2}{4}-2 \Delta 
	 ~\geq~ \frac{\eps }{20} ~.  
\end{align}
%

\paragraph{Step 3: Extending the lemma to  $w \in [-D,D]$.} 

We are left to prove:
$$L_{-}(w)-\min_{w^*}L_{-}(w^*) \geq \frac{\eps }{20},  \qquad  \forall w\in [-D,{D}/{2}]~.$$
According to \cref{Equation:PartialArea1Dim}, 
 it suffices to prove  $L_{-}(w)\geq L_{-}({D}/{2}),\; \forall w\in [-D,{D}/{2}]$. Since $L_{-}$ is convex, showing that the derivative of $L_{-}$ at $D/2$ is negative implies that 
  $L_{-}(w) \geq L_{-}({D}/{2}),\; \forall w\leq {D}/{2}$. Deriving $L_{-}(w)$ at ${D}/{2}$ we get:
 \begin{align*}
\left. \frac{d}{dw} L_{-}(w) \right \vert_{D/2}
	&= \left(\frac{\theta}{2}-\frac{\eps}{D}\right)\left(1-\frac{\theta}{2}\right)\frac{1}{1+e^{-\left(1-\frac{\theta}{2}\right)\frac{D}{2}}} - \theta\left(1-\frac{\theta}{2}+\frac{\eps}{D}\right)\frac{1}{1+e^{\frac{\theta D}{2}}} \\
 &\leq \left(\frac{\theta}{2}-\frac{\eps}{D}\right)\left(1-\frac{\theta}{2}\right)-
  \theta\left(1-\frac{\theta}{2}+\frac{\eps}{D}\right)\left(\frac{1}{2}-\frac{\theta D}{8}\right) \\
  &\leq -\frac{\eps}{D}+\frac{\theta^2 D}{8}
  = -\frac{\eps}{D} +\frac{\eps}{8D} \leq 0~,
 \end{align*}   
where in the first inequality we used $(1+e^x)^{-1}\leq 1,\forall x$, and $(1+e^x)^{-1}\geq \frac{1}{2}-\frac{x}{4},\forall x\geq 0$, this is since $(1+e^x)^{-1}$ is convex for $x\geq 0$
and $\frac{1}{2}-\frac{x}{4}$ is its tangent at $x=0$. In the last line we used $\theta= {\sqrt{\eps}/D}$.
 \end{proof}

\subsection{Proof of \cref{Lemma:TwoAdversaries}}\label{Section:Proof:Lemma:TwoAdversaries2Dim}

\begin{proof}
We assume that the following holds:
 $$\Omega(e^{-D})=100e^{-0.6D/\sqrt{2}}\leq \epsilon\leq \frac{1}{10D} = \O(1/D)~.$$
In the proof we will need to use: $\frac{1}{6D}\leq \frac{1-p}{2}\leq \frac{1}{2D}$, this can be shown by simple algebra using the definition of $p$ in \cref{fig:Adversary2Diml} and using  $D\geq 2$. 
Next we prove the lemma in three steps:
 \paragraph{Step 1: Define $L_{0}(w)$ and find its minima.} Define:
 $$L_0(w) = p \ell(w, x_0)+\frac{1-p}{2}\ell(w,x_l)+ \frac{1-p}{2}\ell(w,x_r) ~,$$ 
 where $p$ is defined in \cref{fig:Adversary2Diml}. Note that $L_{0}$ is the unperturbed version ($\eps=0$) of $L_{+},L_{-}$. 
 We want to show that $w_0=(0, 0.9D)$ is the global minimizer of $L_{0}$; since $L_{0}(w)$ is convex it is sufficient to show that $\nabla L_{0}(w_0)=0$.
 Deriving $L_0$ we get
$$
	\nabla L_0(w) 
	= \frac{p}{D}  \frac{1}{1+e^{ w[2]/D}}(0,-1) \\
	+\frac{1-p}{2\sqrt{2}}\left( 
		\frac{1}{1+e^{(w[1]-w[2])/\sqrt{2}}}(-1,1) 
		+\frac{1}{1+e^{-(w[1]+w[2])/\sqrt{2}}}(1,1) 
	\right) ~.
$$
 substituting  $p$ so that $\frac{p}{1-p} = \frac{D}{\sqrt{2}}\frac{1+e^{0.9}}{1+e^{-0.9D/\sqrt{2}}}$, and $w_0=(0,0.9D)$ confirms that the gradient is indeed 
 zero at $w_0$.

\paragraph{Step 2: Bounding the minimal loss of $L_{+}(w)$.}
We would  like to upper bound the minimal value of $L_{+}(w)$ as follows:
 $$\min_{w^*\in \W}L_{+}(w^*)\leq L_{0}(w_0)-\eps/20~.$$ 
We do so by showing that for $w_a = (0.3D,0.9D)$ it holds that $L_{+}(w_a)\leq L_{0}(w_0)-\eps/20$.

First, notice that we can write $w_a = w_0 + u_a$, where $u_a=(0.3D,0)$. Recalling $\ell(w,x) = \log(1+e^{x\cdot w})$, we use $x_0 \cdot w_a = x_0 \cdot w_0 = -0.9$ to  get:
 \begin{align}\label{Equation:LinearizeLogit1}
 &\ell(w_a, x_0)=\ell(w_0, x_0)~.
 \end{align}
 Moreover:
 \begin{align}\label{Equation:LinearizeLogit2}
 \ell(w_a,x_l)&= x_l \cdot w_a + \log(1+e^{-x_l \cdot w_a}) = x_l \cdot w_0 + x_l \cdot u_a+\log(1+e^{ -\frac{0.6D}{\sqrt{2}}}) \\ \nonumber
 &\leq \ell(w_0,x_l) -\frac{0.3D}{\sqrt{2}}+e^{ -\frac{0.6D}{\sqrt{2}}}~,
 \end{align}
 recalling $\ell(w,x) = \log(1+e^{x\cdot w})$, in the equalities we used $\log(1+e^z) = z+\log(1+e^{-z})$, and $x_l \cdot w_a=\frac{0.6D}{\sqrt{2}}$; 
 In the inequality we used $x_l \cdot u_a =-\frac{0.3D}{\sqrt{2}}$, next we used  $z\leq \log(1+e^z)$, and also $\log(1+z)\leq z$. Similarly to \cref{Equation:LinearizeLogit2}, we can show:
 \begin{align}\label{Equation:LinearizeLogit3}
  \ell(w_a,x_r) \leq \ell(w_0,x_r)+ \frac{0.3D}{\sqrt{2}}+e^{ -\frac{1.2D}{\sqrt{2}}}~.
 \end{align}
 Now, plugging \cref{Equation:LinearizeLogit1,Equation:LinearizeLogit2,Equation:LinearizeLogit3} into the definition of  $L_{+}(w_a)$, we get:
\begin{align}\label{Equation:PertubizedValueEstimate}
L_{+}(w_a)
&\leq p \ell(w_0,x_0)  + \frac{1-p }{2}(1+\eps)\left(\ell(w_0,x_l) -\frac{0.3D}{\sqrt{2}}+e^{ -\frac{0.6D}{\sqrt{2}}}\right) \nonumber\\ 
&\qquad+ \frac{1-p }{2}(1-\eps)\left(\ell(w_0,x_r)+ \frac{0.3D}{\sqrt{2}}+e^{ -\frac{1.2D}{\sqrt{2}}}\right) \nonumber\\
&\leq L_0(w_0) -\frac{1-p }{2}(0.3\sqrt{2}D)\eps+e^{ -\frac{0.6D}{\sqrt{2}}} \nonumber\\
&\leq L_0(w_0) -\frac{\sqrt{2}}{20}\eps+e^{ -\frac{0.6D}{\sqrt{2}}} \nonumber\\
&<  L_0(w_0) -\eps/20  ~, 
 \end{align}
we used $\ell(w_0,x_l)=\ell(w_0,x_r)$, and  $\frac{1-p }{2}\geq \frac{1}{6D}$, we also used  $\eps\geq 100e^{-0.6D/\sqrt{2}}$.
So we showed that $L_{+}(w_a)$ is upper bounded by $L_0(w_0) -\eps/20$, 
thus upper bounding the minimum of $L_{+}(w)$.

\paragraph{Step 3: Bound the excess loss  of predictors $w: w[1]\leq 0$.}

In order to so, it is sufficient to show that the value of such predictors is greater by $\eps/20$ than the upper bound we found for $\min_{w^*\in \W}L_{+}(w^*)$. 
Let us write $L_{+}(w)$ as a sum of $L_0(w)$ and a perturbation:
\begin{align*}
	L_{+}(w) 
	&= L_0(w) + \frac{(1-p )\eps}{2}\big(\log(1+e^{x_l \cdot w})-\log(1+e^{x_r \cdot w})\big)\\
	&= L_0(w) + \frac{(1-p )\eps}{2}\left(\log\big(1+e^{\frac{1}{\sqrt{2}}(w[2]-w[1])}\big)-\log\big(1+e^{\frac{1}{\sqrt{2}}(w[2]+w[1])}\big)\right) \\
	&\geq L_0(w) ~, 
	\qquad \forall w[1]\leq 0 ~.
 \end{align*} 
The inequality follows since $w[1]\leq 0$ and $\log(1+e^z)$ is monotonically increasing, therefore the perturbation  summand is positive.
 Combining the  above inequality with the upper bound found in step 2 above we get:
 \begin{align*}
 L_{+}(w) -\min_{w^*\in\W}L_{+}(w^*)&\geq L_0(w) -\big(  L_{0}(w_0)-\eps/20 \big) 
 \geq   \eps/20 ~, 
 \qquad \forall w:w[1]\leq0 ~,
  \end{align*}
  and the last inequality follows from $w_0$ being the minimizer of $L_0(w)$.
 We can similarly show that for predictors $w$ such that  $w[1]\geq0$, then $L_{-}(w)-\min_{w^*\in \W}L_{-}(w^*) \geq \eps/20$ applies.
\end{proof}

\subsection{Proof of \cref{Theorem:LogitFTRL}}\label{section:proof:LogitFTRL}

Since the approximate losses $\tilde{\ell}_t$ defined in \cref{Equation:ApproximationLoss} satisfy the conditions of 
\cref{Lemma:ApproximateLoss} then it suffices to prove the lower bound for the regret of the $\tilde{\ell}_t$'s.

Denoting, $F_t(w) = \sum_{\tau=1}^{t-1}\tilde{\ell}_\tau(w)+ R(w)$, then \cref{algorithm:LogitFTRL} chooses $w_t = \argmin_{w\in\W}F_t(w)$.
Letting  $u_t$ be the global minimizer of $F_t$, the following is equivalent to \cref{algorithm:LogitFTRL}:
\begin{algorithm}[H]
\caption{Equivalent form-FTRL}
\label{algorithm:Equiv1LogitFTRL}
\begin{algorithmic}
\STATE Calculate: $u_t=\argmin_{w\in \reals}\sum_{\tau=1}^{t-1} \tilde{\ell}_\tau(w) + \eta^{-1} R(w)$ 
\STATE Choose: $w_t =\argmin_{w\in\W}|w-u_t|$
\end{algorithmic}
\end{algorithm} 
\cref{algorithm:Equiv1LogitFTRL} first finds $u_t$, the global minima of $F_t$, and then projects $u_t$ onto $\W$.
 The expression for $u_t$ in \cref{algorithm:Equiv1LogitFTRL} is useful since it enables us to calculate the differences  $|u_{t-1}-u_{t}|$, which upper bound the differences 
 between predictors: $|w_{t-1}-w_t|$; these differences are useful in bounding the regret of FTRL as seen in the next lemma due to \cite{KalaiVempala} (proof 
 can be found in \cite{Hazan09}  or in \cite{shalev2011online}):
\begin{lemma} \label{Lemma:FTL-BTL}
Let a regularizer function $R$, and $f_t$, for $t=1,\ldots,T$, be a sequence of cost functions and let $w_t = \argmin_{w\in\mathcal{K}}\sum_{\tau=1}^{t-1}f_{\tau}(w)+\eta^{-1}R(w)$, 
Then:
\begin{align*}
\sum_{t=1}^T f_t(w_t)-\sum_{t=1}^T f_t(v)\leq \sum_{t=1}^T \nabla f_t(w_t) \cdot (w_t-w_{t+1})+\eta^{-1}(R(v)-R(w_1)), \quad \forall v\in\mathcal{K}
\end{align*}
\end{lemma}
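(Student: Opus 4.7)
The plan is to prove the inequality by combining a classical \emph{Be-the-Leader} (BTL) style inequality, proved by induction, with a single application of convexity of each $f_t$. Define $w_{T+1} \eqdef \argmin_{w\in\K}\sum_{\tau=1}^{T}f_\tau(w)+\eta^{-1}R(w)$ so that $w_{t+1}$ is defined for all $t=1,\ldots,T$. The crux of the argument is the auxiliary claim
\begin{align*}
\sum_{t=1}^{T} f_t(w_{t+1}) ~\le~ \sum_{t=1}^{T} f_t(v) + \eta^{-1}\bigl(R(v) - R(w_1)\bigr), \qquad \forall v \in \K.
\end{align*}
Once this is established, the lemma follows in one line by writing $f_t(w_t) = \bigl(f_t(w_t)-f_t(w_{t+1})\bigr) + f_t(w_{t+1})$, summing, bounding the first summand via convexity of $f_t$ (namely $f_t(w_t)-f_t(w_{t+1}) \le \nabla f_t(w_t)\cdot(w_t-w_{t+1})$), and bounding the second summand by the auxiliary claim.

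To prove the auxiliary claim I would induct on $T$. The base case $T=1$ uses two optimality facts: $w_2$ minimizes $f_1 + \eta^{-1}R$ over $\K$, so $f_1(w_2)+\eta^{-1}R(w_2) \le f_1(v)+\eta^{-1}R(v)$, and $w_1$ minimizes $\eta^{-1}R$ over $\K$, so $\eta^{-1}R(w_1)\le \eta^{-1}R(w_2)$; chaining these yields the claim at $T=1$. For the inductive step, assume the claim at $T-1$ and instantiate it at the specific choice $v = w_{T+1}$ to obtain
\begin{align*}
\sum_{t=1}^{T-1} f_t(w_{t+1}) ~\le~ \sum_{t=1}^{T-1} f_t(w_{T+1}) + \eta^{-1}\bigl(R(w_{T+1}) - R(w_1)\bigr).
\end{align*}
Adding $f_T(w_{T+1})$ to both sides and invoking the defining optimality of $w_{T+1}$, namely $\sum_{t=1}^{T} f_t(w_{T+1}) + \eta^{-1}R(w_{T+1}) \le \sum_{t=1}^{T} f_t(v) + \eta^{-1}R(v)$, closes the induction.

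There is no real obstacle here; the argument is entirely mechanical once the auxiliary BTL inequality is in place. The only subtlety worth flagging is that the displayed bound uses $\nabla f_t(w_t)\cdot(w_t-w_{t+1})$, which tacitly assumes that each $f_t$ is convex (or at least admits a subgradient satisfying the one-sided inequality); for the approximate losses $\tilde{\ell}_t$ defined in \cref{Equation:ApproximationLoss} this holds, since each piece is either affine or convex quadratic and a valid (sub)gradient at $w_t$ is available. Thus the whole proof amounts to: (i) a two-line base case, (ii) a short inductive step driven by the optimality of $w_{T+1}$, and (iii) a telescoping rewrite combined with convexity to convert $f_t(w_t)-f_t(w_{t+1})$ into the gradient-inner-product term stated in the lemma.
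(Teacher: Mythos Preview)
Your proof is correct and is precisely the standard Be-the-Leader induction argument that the paper defers to (the paper does not prove \cref{Lemma:FTL-BTL} itself but cites \cite{KalaiVempala}, \cite{Hazan09}, and \cite{shalev2011online}, where exactly this argument appears). Your observation about the implicit convexity assumption is well taken and consistent with how the lemma is applied to the $\tilde{\ell}_t$'s.
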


Note that in our one-dimensional case the gradient $\nabla \tilde{\ell}_t(w)$ is simply the derivative $\tilde{\ell}'_t(w)$. Also note that we can bound the \emph{FTL-BTL differences:} 
$\tilde{\ell}'_t(w_t)(w_t-w_{t+1})$ as follows:
\begin{align}\label{equation: u_tBound}
 \tilde{\ell}'_t(w_t)(w_t-w_{t+1})\leq |\tilde{\ell}'_t(w_t)(w_t-w_{t+1})|\leq  |y_t||w_t-w_{t+1}|\leq  |y_t||u_t-u_{t+1}|~,
\end{align}
where we used $\tilde{\ell}_t'(w_t)=y_t$ (see \cref{Equation:ApproximationLoss}), we also used $|w_{t+1}-w_{t}|\leq |u_{t+1}-u_t|$ which follows from $w_t$ being the projection of
$u_t$ onto $\W=[-D,D]$.

Combining \cref{Lemma:FTL-BTL} with \cref{equation: u_tBound}, the regret of \cref{algorithm:LogitFTRL} is bounded as follows:
\begin{align}
\text{Regret}_T\leq \sum_{t=1}^T |y_t(w_{t}-w_{t+1})|+\eta^{-1}\frac{D}{16}     \leq \sum_{t=1}^T |y_t(u_{t}-u_{t+1})|+\frac{D}{16}  T^{1/3}  ~,
\end{align}
where we used $R(w) = \frac{1}{16 D} w^2 \leq \frac{D}{16}  ,\; \forall w\in \W$, and $\eta=T^{-1/3}$.
In \cref{section:sameSigns,section:oppositeSigns}, we analyze the differences $|y_t(w_{t}-w_{t+1})|$, we divide the analysis into two cases: 
\begin{enumerate}
\item rounds in which $u_t u_{t+1}\geq 0$: \cref{section:sameSigns}.
\item rounds in which $u_t u_{t+1}< 0$: \cref{section:oppositeSigns}.
\end{enumerate}

\subsubsection{Rounds in which $u_t u_{t+1}\geq 0$}\label{section:sameSigns}
Assume without loss of generality that $u_t,u_{t+1}$, are both positive. Hence, $u_i = \argmin_{w\geq 0}F_i(w),\; i\in\{t,t+1\}$. 
For $w\geq 0$,  the losses are either linear with a  \emph{positive slope} $\geq 1/2D$, or quadratic losses, this can be seen easily from \cref{Equation:ApproximationLoss}. 
Lets introduce some notation:
\begin{align*}
y_t^q = y_t\mathbbm{1}_{\{\text{$|x_t|\leq 1/D$ or $x_t w_t\leq0$ }\}}+ y_t \mathbbm{1}_{\{ \text{$w_t\leq 0$, $x_t\leq -1/D$ }\}}; \quad y_t^{l} = y_t \mathbbm{1}_{\{  \text{ $w_t\geq 0$, $x_t\geq 1/D$}\}}; 
\end{align*}
The notation``$q$", stands for quadratic losses on $w\geq0$,  the ``$l$" notation is for losses that are linear on $w\geq 0$. We will also use the following notation $\hat{w}_t$:
\begin{equation}
\hat{w}_t=
\left\{
\begin{aligned}
0;        &\quad \text{if  $w_t\leq 0$, $x_t\leq -\frac{1}{D}$ }\\
w_t;    &\quad \text{otherwise }
\end{aligned} 
\right.
\end{equation}
Using these new notations, and the expression for the $\tilde{\ell}_t$'s in \cref{Equation:ApproximationLoss}, then $\forall w\geq0$:
\begin{align*}
F_{t}(w) &= \sum_{\tau=1}^{t-1} \tilde{\ell}_\tau(w) + \eta^{-1}\frac{\beta}{2} w^2 
= \sum_{\tau=1}^{t-1}y_{\tau}w +
 \frac{\beta}{2}\sum_{\tau = 1}^{t-1}(y_\tau^q)^2(w-\hat{w}_\tau)^2 + \eta^{-1}\frac{\beta}{2}w^2   ~,
\end{align*}
where we used $R(w)=\frac{1}{16D}w^2=\frac{\beta}{2}w^2$.
From the last expression we can derive an analytic expression for $u_t$:
\begin{align}\label{Equation:GlobalMinimaPositive}
u_t =\argmin_{w\geq 0}F_t(w) =-\frac{1}{\beta}\frac{\sum_{\tau=1}^{t-1}y_\tau - \beta \sum_{\tau}^{t-1} (y_\tau^q)^2 \hat{w}_\tau}{\sum_{\tau}^{t-1}(y_{\tau}^{q})^2+\eta^{-1}} ~. 
\end{align}
Next we analyze the sum of differences $\sum_{\tau=1}^t y_\tau(w_{\tau}-w_{\tau+1})$, the analysis divides into two sub-cases, first we analyze rounds in which 
 $\tilde{\ell}_t$ is quadratic, and then we analyze rounds where $\tilde{\ell}_t$ is linear:

\paragraph{Rounds when $\tilde{\ell}_t$ is quadratic for $w\geq0$:}
 In that case $y_t=y_t^q$ and we have: 
\begin{align*}
\tilde{\ell}_t(w) &= y_t^q w+\frac{\beta}{2}(y_t^q)^2(w-{w}_t)^2, \qquad \forall w\geq 0  ~,
\end{align*}
for such a quadratic loss $\tilde{\ell}_t$, then \cref{Equation:GlobalMinimaPositive} provides an analytic expression for $u_{t+1}$, subtracting $u_t$ is can be shown that:
\begin{align}\label{u_tLinearDiff}
u_{t} -u_{t+1} =\frac{1}{\beta}\frac{y_t^q+\beta (y_t^q)^2(u_t-w_t)}{\sum_{\tau=1}^{t-1}(y_\tau^{q})^2+\eta^{-1}}~,
\end{align}
If both $u_t,u_{t+1}\geq D$, it means that $w_t=w_{t+1}=D$, and therefore:
\begin{align}
y_t^q(w_t-w_{t+1})=0~.
\end{align}
If either $u_t< D$ or $u_{t+1}< D$, we have:
\begin{align}
|y_t^q(w_t-w_{t+1})| &\leq   |y_t^{q}(u_{t}-u_{t+1})| =\frac{1}{\beta}\frac{(y_{t}^{q})^2}{\sum_{\tau=1}^{t}(y_\tau^{q})^2 +\eta^{-1}}|1+\beta y_t^{q}(u_t-w_t)| \nonumber \\
&\leq \frac{2}{\beta}\frac{(y_{t}^{q})^2}{\sum_{\tau=1}^{t}(y_\tau^{q})^2 +\eta^{-1}}~,
\end{align}
 where we used the inequality $|u_t-u_{t+1}|\leq 4D$ (can be derived from the expressions for $u_t,u_{t+1}$ ), and thus if either $u_t,u_{t+1}$ is smaller than $D$ it follows $|u_t|\leq 5D$, we then use  $|w_t|\leq D, |u_t|\leq 5D$, $\beta=\frac{1}{8D}$, and $|y_t^{q}|\leq 1$  to show that
 $|1+\beta y_t^{q}(u_t-w_t)|\leq 2$.
 
 Thus,  for rounds in which $u_t u_{t+1}\geq 0$ and $\tilde{\ell}_t$ is quadratic, we can bound the regret by:
 \begin{align}\label{equation:upperboundQuadLoss}
  \frac{2}{\beta}\sum_{t=1}^T \frac{(y_{t}^{q})^2}{\sum_{\tau=1}^{t}(y_\tau^{q})^2 +\eta^{-1}}\leq 16D\log(T+1)~,
  \end{align}
 where we used $\beta=1/8D$ together with the following lemma, taken from  \cite{HazanAK07}:
\begin{lemma}
Let $v_t \in \reals$, for $t=1,\ldots,T$, be a sequence of scalars such that for some $r$, $|v_t|\leq r$. Then:
\begin{align*}
\sum_{t=1}^T \frac{v_t^2}{\sum_{\tau=1}^t v_\tau^2+\eps}\leq \log(r^2T/\eps+1)~.
\end{align*}
\end{lemma}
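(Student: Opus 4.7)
The plan is to reduce the sum to a telescoping logarithmic sum using a standard potential-function trick. I would first introduce the cumulative quantity $S_t := \eps + \sum_{\tau=1}^{t} v_\tau^2$, with $S_0 = \eps$, so that $v_t^2 = S_t - S_{t-1}$ and the $t$-th summand in the inequality rewrites as
\[
    \frac{v_t^2}{\sum_{\tau=1}^{t} v_\tau^2 + \eps} \;=\; \frac{S_t - S_{t-1}}{S_t} \;=\; 1 - \frac{S_{t-1}}{S_t}.
\]

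The single elementary inequality I would invoke is $1 - 1/x \le \ln x$ for all $x > 0$ (equivalently $1 - y \le -\ln y$ for $y \in (0,1]$, a one-line consequence of the concavity of $\ln$). Applied with $x = S_t/S_{t-1} \ge 1$, this yields $1 - S_{t-1}/S_t \le \ln(S_t/S_{t-1})$. Summing this over $t = 1, \ldots, T$ produces a telescoping sum equal to $\ln(S_T/S_0)$, that is,
\[
    \sum_{t=1}^{T} \frac{v_t^2}{\sum_{\tau=1}^{t} v_\tau^2 + \eps} \;\le\; \ln\!\Big(\frac{S_T}{\eps}\Big) \;=\; \ln\!\Big(1 + \frac{\sum_{t=1}^{T} v_t^2}{\eps}\Big).
\]

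Finally, the bound $|v_t| \le r$ gives $\sum_{t=1}^{T} v_t^2 \le r^2 T$, and monotonicity of $\ln$ yields the stated bound $\ln(r^2 T/\eps + 1)$. There is no serious obstacle here: the argument is a two-line potential/telescoping argument resting entirely on the concavity of the logarithm, and the only modest care needed is to verify $S_t \ge S_{t-1} > 0$ so the logarithms are well-defined, which is immediate from $\eps > 0$.
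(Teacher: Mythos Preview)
Your argument is correct and is exactly the standard potential/telescoping proof of this inequality. The paper itself does not prove this lemma at all---it simply cites it from \cite{HazanAK07}---so there is nothing to compare against; your write-up could serve as the missing proof verbatim.
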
 
 \paragraph{Rounds when $\tilde{\ell}_t$ is linear for $w\geq0$:}
In that case $x_t\geq \frac{1}{D}$, $y_t = y_t^l$ and we have:
$$\tilde{\ell}_t(w) = y_t^l w,\qquad \forall w\geq 0 ~,$$
for such a linear loss $\tilde{\ell}_t$, then \cref{Equation:GlobalMinimaPositive} provides an analytic expression for $u_{t+1}$, subtracting $u_t$ is can be shown that:
\begin{align}\label{u_tLinearDiff}
u_{t} -u_{t+1} =\frac{1}{\beta}\frac{y_t^l}{\sum_{\tau=1}^{t-1}(y_\tau^{q})^2+\eta^{-1}}~.
\end{align}
We are left to bound the sum of differences $y_t(w_{t}-w_{t+1})$ at times in which $\tilde{\ell}_t$ is linear (and therefore $y_t=y_t^l$, and $x_t\geq \frac{1}{D}$); according to \cref{u_tLinearDiff} each such difference is bounded by:
\begin{align}\label{equation:BoundPosx}
y_t^l(w_{t}-w_{t+1}) = \frac{1}{\beta}\frac{(y_t^{l})^2}{\sum_{\tau=1}^{t-1}(y_\tau^{q})^2+\eta^{-1}}~.
\end{align}
Define $n_{+}(t)$, to be the number of positive linear losses received at the first $t$ rounds:
$$n_{+}(t) =\sum_{\tau=1}^t \mathbbm{1}_{\{x_\tau \geq \frac{1}{D}, w_t\geq 0\}}~.$$
Suppose that $n_{+}(T)\leq T^{2/3}$, and recall $\eta=T^{1/3}$ and $|y_t^l |\leq 1$, then:
\begin{align} \label{equation:upperboundLinLoss2}
\sum_{t=1}^T y_t^l(w_{t}-w_{t+1}) \mathbbm{1}_{\{x_t\geq \frac{1}{D}, w_t\geq 0\}}& 
=\frac{1}{\beta} \sum_{t=1}^T \frac{(y_t^{l})^2}{\sum_{\tau=1}^{t-1}(y_\tau^{q})^2+\eta^{-1} }\mathbbm{1}_{\{x_t\geq \frac{1}{D}, w_t\geq 0\}}
\leq \frac{1}{\beta}  \eta n_{+}(T) \nonumber \\
& \leq \frac{1}{\beta}T^{-1/3}T^{2/3}  = 8D T^{1/3}~.
\end{align}
Suppose on the contrary that $n_{+}(T) \geq T^{2/3}$, so till time $t_0$ for which $n_{+}(t_0)=T^{2/3}$, we accumulate a regret bounded by $8DT^{1/3}$. Next, we analyze the
FTL-BTL differences at rounds in which $x_t\geq \frac{1}{D}$, $w_t\geq 0$, and $t\geq t_0$. From \cref{Equation:GlobalMinimaPositive} for $u_t$, it can be  
seen that $u_t\geq 0 $ implies:
 $$ \sum_{\tau=1}^{t-1}y_\tau - \beta \sum_{\tau=1}^{t-1} (y_\tau^q)^2 \hat{w}_\tau \leq 0~.$$
the latter equation can be written as follows:
\begin{align}\label{Equation:GetTthird}
 \sum_{\tau=1}^{t-1}(-y_\tau)\mathbbm{1}_{\{y_\tau< 0 \}} + \beta \sum_{\tau=1}^{t-1} (y_\tau^q)^2 \hat{w}_\tau \geq \sum_{\tau=1}^{t-1} y_{\tau}\mathbbm{1}_{\{y_\tau\geq 0\}}~.
\end{align}
The RHS of the last equation can be lower bounded as follows:
\begin{align}\label{Equation:GetTthirdsecond}
\sum_{\tau=1}^{t-1} y_{\tau}\mathbbm{1}_{\{y_\tau\geq 0\}}\geq \sum_{\tau=1}^{t-1} y_{\tau}^l \geq \frac{1}{2D}n_{+}(t-1) ~,
\end{align}
where we used the definition of $n_{+}(t)$, and $y_\tau^l\geq 1/2D$. The LHS of \cref{Equation:GetTthird} is upper bounded as follows:
\begin{align}\label{equation:GettthirdLast}
\sum_{\tau=1}^{t-1}(-y_\tau)\mathbbm{1}_{\{y_\tau< 0 \}} + \beta \sum_{\tau=1}^{t-1} (y_\tau^q)^2 \hat{w}_\tau &\leq 
 \sum_{\tau=1}^{t-1}|y_\tau^q|+\frac{1}{8} \sum_{\tau=1}^{t-1}|y_\tau^q| \leq \frac{9}{8}\sqrt{(t-1)\sum_{\tau=1}^{t-1}(y_{\tau}^q)^2}~,
\end{align}
in the first inequality we used  $\sum_{\tau=1}^{t-1}(-y_\tau)\mathbbm{1}_{\{y_\tau< 0 \}}\leq \sum_{\tau=1}^{t-1}|y_\tau^q|$,  also $|\hat{w}_\tau|\leq D$, $\beta =1/8D$, and
finally $ (y_\tau^q)^2\leq |y_\tau^q|$ (since $|y_\tau^q|\leq 1$); in the second inequality we used  $||z||_1\leq \sqrt{N ||z||_2^2},\; \forall z\in \reals^N $.
Combining \cref{Equation:GetTthird,Equation:GetTthirdsecond,equation:GettthirdLast} we get:
\begin{align}\label{Equation:GetTthird3}
\sum_{\tau=1}^{t-1}(y_{\tau}^q)^2\geq  \frac{1}{10D^2}\frac{n_{+}^2(t-1)}{t-1} \geq  \frac{1}{10D^2}\frac{n_{+}^2(t-1)}{T}~.
 \end{align}
Using the inequality in \cref{Equation:GetTthird3} inside \cref{equation:BoundPosx}, then the sum of differences $y_t(w_{t}-w_{t+1})$ 
for the rounds with a linear loss (hence $y_t=y_t^l$) and $t> t_0$, we can upper bound:
\begin{align} \label{equation:upperboundLinLoss}
\sum_{t=t_0+1}^T y_t^l(w_{t}-w_{t+1})& \leq  
\frac{1}{\beta}\sum_{t=t_0+1}^T\frac{(y_t^{l})^2}{\sum_{\tau=1}^{t-1}(y_\tau^{q})^2+\eta^{-1}} 
\leq 80 D^3 T\sum_{t=t_0+1}^T \frac{1}{n_{+}^2(t-1)}\mathbbm{1}_{\{x_t\geq \frac{1}{D},w_t\geq0\}}  \nonumber \\
&\leq  80 D^3 T \sum_{i=T^{2/3}}^{n_{+}(T)} \frac{1}{i^2} \leq    80 D^3 T \frac{2}{T^{2/3}} = 160 D^3 T^{1/3}~,
\end{align}
where we assumed $n_{+}(t_0)=T^{2/3}\leq n_{+}(T)$, and used  $\beta=1/8D$,  $(y_t^{l})^2\leq 1$, finally we applied:
$$\sum_{i=n_1}^{n_2}\frac{1}{i^2} \leq \frac{1}{n_1^2}+\int_{y=n_1}^{\infty}\frac{1}{y^2}dy=\frac{1}{n_1^2}+\frac{1}{n_1}\leq \frac{2}{n_1}~.$$
Hence during rounds where $u_t u_{t+1}\geq 0$, then  \cref{equation:upperboundQuadLoss,equation:upperboundLinLoss2,equation:upperboundLinLoss} upper bound the regret of \cref{algorithm:LogitFTRL} by:
$$16D\log(T+1)+ 8D T^{1/3}+ 160 D^3 T^{1/3}  ~. $$

\subsubsection{Rounds in which $u_t u_{t+1}< 0$} \label{section:oppositeSigns}

Assume without loss of generality that,  $u_t\geq 0$, and $u_{t+1}<0$, thus, $u_t = \argmin_{w\geq 0}F_t(w)$ and
$u_{t+1} = \argmin_{w\leq 0}F_{t+1}(w)$. Since $u_t\geq0$, then according to \cref{Equation:GlobalMinimaPositive} we have:
\begin{align*}
\sum_{\tau=1}^{t-1}y_\tau - \beta \sum_{\tau}^{t-1} (y_\tau^q)^2 \hat{w}_\tau\leq 0~.
\end{align*}
Since $u_{t+1}\leq 0$, we must have: 
$$\sum_{\tau=1}^{t-1}y_\tau - \beta \sum_{\tau=1}^{t-1} (y_\tau^q)^2 \hat{w}_\tau+y_t-\beta (y_t^q)^2 \hat{w}_t \geq 0~,$$
 or else the global minima would be positive. The last two inequalities imply that:
\begin{align}
|\sum_{\tau=1}^{t-1}y_\tau - \beta \sum_{\tau=1}^{t-1} (y_\tau^q)^2 \hat{w}_\tau|\leq y_t-\beta (y_t^q)^2 \hat{w}_t \leq y_t~.
\end{align}
Combining the last equation with \cref{Equation:GlobalMinimaPositive}, we get:
\begin{align*}
u_t \leq\frac{1}{\beta}\frac{y_t}{\sum_{\tau}^{t-1}(y_{\tau}^{q})^2+\eta^{-1}} ~,
\end{align*}
and therefore:
\begin{align*}
 y_t u_t \leq \frac{1}{\beta}\frac{y_t^2}{\sum_{\tau}^{t-1}(y_{\tau}^{q})^2+\eta^{-1}} ~.
\end{align*}
Similar to the analysis made in \cref{section:sameSigns} we can show that: 
\begin{align*}
 \sum_{t=1}^T y_t u_t\mathbbm{1}_{\{u_t u_{t+1}<0\}} \leq 16D\log(T+1)+ 8D T^{1/3}+ 160 D^3 T^{1/3} ~.
\end{align*}
symmetrically, we can show:
\begin{align*}
\sum_{t=1}^T y_t u_{t+1} \mathbbm{1}_{\{u_t u_{t+1}<0\}} \geq -16D\log(T+1)- 8D T^{1/3}- 160 D^3 T^{1/3} ~.
\end{align*}
From the last two inequalities, it follows:
\begin{align*}
\sum_{t=1}^T y_t (u_t-u_{t+1})\mathbbm{1}_{\{u_t u_{t+1}<0\}} \leq 32D\log(T+1)+ 16D T^{1/3}+ 320 D^3 T^{1/3} ~.
\end{align*}

\subsubsection{Concluding the proof}

According to \cref{section:sameSigns,section:oppositeSigns},  the regret of \cref{algorithm:LogitFTRL} is upper bounded by:
$$\text{Regret}_T\leq 48D\log(T+1)+ 24D T^{1/3}+ 480 D^3 T^{1/3}+\frac{D}{16}  T^{1/3}~,   $$ 
where the last term is due to the regularization. \qed

\subsection{Proof of \cref{Lemma:SatisfyApproximateLoss} } \label{section:proof:Lemma:SatisfyApproximateLoss}

\begin{proof}
For ease of notation we use the following shorthand for the logistic loss: 
$$\ell_t(w):=\ell(w,x_t)=  \log(1+e^{x_t w})$$
The proof is divided into 4 cases:
\paragraph{Case 0.}
Denote by $\tilde{\ell}^{(0)}_t$, an approximate loss of the logistic around $w=0$, thus:
\begin{align}\label{Equation:LosstildeZero}
\tilde{\ell}_t^{(0)}(w)= \ell_t(0)+\ell_t'(0) w+\frac{\beta}{2} x_t^2 w^2 = \log(2)+\frac{x_t}{2}w + \frac{\beta}{2} x_t^2 w^2
\end{align}
where we used $\ell_t(w) = \log(1+e^{x_t w})$. Next, we show that $\tilde{\ell}_t^{(0)}(w)\leq \ell_t(w),\; \forall w\in[-D,D]$. Lets write
$\ell_t(w)-\tilde{\ell}_t^{(0)}(w)$, explicitly:
\begin{align*}
\ell_t(w)-\tilde{\ell}_t^{(0)}(w)&=\log(1+e^{x_t w})-\log(2)-\frac{x_t w}{2}-\frac{\beta}{2}(x_t w)^2=\log(1+e^{z})-\log(2)-\frac{z}{2}-\frac{\beta}{2}z^2 \\
&=\log(\frac{e^{-z/2}+e^{z/2}}{2})-\frac{\beta}{2}z^2
\end{align*}  
and  we denoted $z=x_t w$. Thus, it is sufficient to show that $\log(\frac{e^{-z/2}+e^{z/2}}{2})-\frac{\beta}{2}z^2 \geq 0,\; \forall z\in[-D,D]$.
Assume $z\in[-10,10]$, then  from the taylor expansion of $\log(\frac{e^{-z/2}+e^{z/2}}{2})$ around zero, there exists $\bar{z}: |\bar{z}|\leq 10$ such that:
\begin{align*}
\log(\frac{e^{-z/2}+e^{z/2}}{2})-\frac{\beta}{2}z^2&=\frac{z^2}{8}-\frac{\bar{z}^4}{192}-\frac{\beta}{2}z^2 \geq (\frac{1}{8}-\frac{1}{16D})z^2-\frac{z^4}{192} \\
& \geq \frac{z^2}{16}-\frac{z^4}{192}\geq 0, \qquad \forall z\in[-10,10]
\end{align*}  
where we used $\beta=\frac{1}{8D}$, $D\geq 2$, and $|\bar{z}|\leq |z|\leq 10$.
Assuming $10\leq |z|\leq D$:
\begin{align*}
\log(\frac{e^{-z/2}+e^{z/2}}{2})-\frac{\beta}{2}z^2&\geq   \log(e^{|z|/2})-\frac{\beta}{2}z^2-\log(2) = \frac{|z|}{2}-\frac{1}{16D}z^2-\log(2) \\
&\geq \frac{|z|}{2}-\frac{|z|}{8}-\log(2)\geq 0, \qquad \forall 10\leq |z|\leq D
\end{align*}  
we used $\beta=\frac{1}{8D}$, in the second inequality we used $|z|\leq D$, and in the last inequality we used $|z|\geq10$. So we have shown:
\begin{align} \label{Equation:LossZerotildeInequality}
\tilde{\ell}_t^{(0)}(w)\leq \ell_t(w),\; \forall w\in[-D,D]
\end{align}

\paragraph{Case 1: $w_t\geq 0$, $x_t\geq \frac{1}{D}$.}
For that case, the approximate loss $\tilde{\ell}_t$ of \cref{Equation:ApproximationLoss} can be written as follows:
\begin{equation}
\tilde{\ell}_t(w)=
\left\{
\begin{aligned} \label{Equation:TildeLoss}
\ell_t(w_t) + \ell_t'(w_t)(w-w_t);  \qquad \qquad      &\quad \text{if $w\in [0,D]$} \\
\ell_t(w_t) + \ell_t'(w_t)(w-w_t)+\frac{\beta}{2} y_t^2 w^2;        &\quad \text{if  $w\in [-D,0]$ }
\end{aligned} 
\right.
\end{equation}
where $\ell_t(w) = \log(1+e^{x_t w})$, $y_t  =\ell_t'(w_t)= \frac{x_t e^{x_t w_t}}{1+e^{x_t w_t}}$.
It is easily noticed that $\tilde{\ell}_t(w_t)=\ell_t(w_t)$. Also note that for positive instances $\tilde{\ell}_t(w)$ is the tangent of $\ell_t(w)$ at $w_t$, since $\ell_t(w)$ is convex it follows that:
$$\tilde{\ell}_t(w)\leq \ell_t(w),\quad \forall w\in[0,D]$$
We are left to prove the latter inequality holds for negative instances. Recalling $\tilde{\ell}_t^{(0)}$ from \cref{Equation:LosstildeZero}, we will show that:
\begin{align}\label{Equation:IneqTwoSided}
\tilde{\ell}_t(w)\leq \tilde{\ell}_t^{(0)}(w)\leq \ell_t(w) \qquad \forall w\in [-D,0]
\end{align}
Thus, concluding the proof. The lefthand inequality of \cref{Equation:IneqTwoSided} can be derived as follows: 
\begin{align}\label{Equation:IneqTwoSidedLeft}
\tilde{\ell}_t(w)&=\ell_t(w_t) + \ell_t'(w_t)(w-w_t)+\frac{\beta}{2} y_t^2 w^2  = \tilde{\ell}_t(0)+\ell_t'(w_t) w+\frac{\beta}{2} y_t^2 w^2 \\ \nonumber
&\leq  \ell_t(0)+\ell_t'(0) w+\frac{\beta}{2} x_t^2 w^2=\tilde{\ell}_t^{(0)}(w), \qquad \forall w\leq 0, w_t\in[0,D] \nonumber
\end{align}
where we used $\tilde{\ell}_t(0)\leq \ell_t(0)$, $0\leq \ell_t'(0)\leq \ell_t'(w_t)$, and $w\leq0$, moreover we used $|y_t|=  |\frac{x_t e^{x_t w_t}}{1+e^{x_t w_t}}|\leq |x_t|$.
The righthand inequality of \cref{Equation:IneqTwoSided}, is proved in the former case, see \cref{Equation:LossZerotildeInequality}.

The proof for the case $w_t\leq 0$, $x_t\leq -\frac{1}{D}$ is similar.

\paragraph{Case 2: $|x_t|\leq\frac{1}{D}$.}

For that case, the approximate loss $\tilde{\ell}_t$ of \cref{Equation:ApproximationLoss} can be written as follows:
\begin{equation}\label{Equation:LtildeSquare}
\tilde{\ell}_t(w)= \ell_t(w_t) + \ell_t'(w_t) (w-w_t) + \frac{\beta}{2}\big(\ell_t'(w_t)\big)^2(w-w_t)^2
\end{equation}
where we used, $y_t  =\ell_t'(w_t)$. Noticeably $\tilde{\ell}_t(w_t)=\ell_t(w_t)$. To prove $\tilde{\ell}_t(w)\leq \ell_t(w)$,
we require the following lemma from  \cite{HazanAK07}:
 \begin{lemma}\label{Lemma:ExpConcavity}
 For a function $f:\mathcal{K}\to R$, where $\mathcal{K}$ has diameter $D$, such that $\forall w\in \mathcal{K}$, $||\nabla f(w)||\leq G$, and $e^{-\alpha f(w)}$ is concave,
 the following holds for $\gamma =\frac{1}{2}\min\{\frac{1}{4GD},\alpha \}$:
 \begin{align*}
 f(w)\geq f(w_0) + \nabla f(w_0)^T(w-w_0) + \frac{\gamma}{2}(\nabla f(w_0)^T(w-w_0))^2, \qquad \forall w,w_0\in\mathcal{K}
 \end{align*}
 \end{lemma}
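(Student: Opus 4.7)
The plan is to derive the quadratic lower bound from the exp-concavity hypothesis by applying a first-order inequality to a suitably rescaled exponential, then extracting a quadratic term from the logarithm. My first observation is that exp-concavity is monotone in the parameter: the condition ``$e^{-\alpha f}$ is concave'' is equivalent to $\nabla^2 f \succeq \alpha \nabla f\nabla f\tr$, and this is preserved when $\alpha$ is replaced by any smaller positive constant. Since $2\gamma \le \alpha$ by the definition of $\gamma$, the function $h(w) \eqdef e^{-2\gamma f(w)}$ is also concave on $\K$.

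Next I would apply the standard first-order inequality for a concave function at $w_0$ to $h$, using $\nabla h(w_0) = -2\gamma\, e^{-2\gamma f(w_0)} \nabla f(w_0)$, to obtain
\begin{align*}
e^{-2\gamma f(w)} \;\le\; e^{-2\gamma f(w_0)}\bigl(1 - 2\gamma\, \nabla f(w_0)\tr (w-w_0)\bigr).
\end{align*}
Setting $z \eqdef 2\gamma\, \nabla f(w_0)\tr(w-w_0)$, taking logs of both sides, and dividing by $-2\gamma$ would yield $f(w) - f(w_0) \ge -\tfrac{1}{2\gamma}\log(1-z)$.

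The main (and essentially the only) technical step is to show that $-\log(1-z) \ge z + z^2/4$ on the range of $z$ relevant here. The constraint $\gamma \le 1/(8GD)$ together with $\|\nabla f\|\le G$ and $\|w-w_0\|\le D$ gives $|z| \le 2\gamma G D \le 1/4$, which is comfortably inside the interval where the quadratic Remainder of $-\log(1-z)$ dominates. For $z\in[0,1/2]$ the Taylor series $-\log(1-z) = \sum_{k\ge 1} z^k/k$ immediately yields $-\log(1-z)\ge z + z^2/2 \ge z + z^2/4$; for $z\in[-1/2,0]$ the inequality reduces to $\log(1+|z|) \le |z| - |z|^2/4$, which follows by comparing derivatives on $[0,1]$. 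Substituting this bound back and noting that the quadratic coefficient becomes $\tfrac{1}{2\gamma}\cdot\tfrac{(2\gamma)^2}{4} = \tfrac{\gamma}{2}$ produces exactly
\begin{align*}
f(w) \;\ge\; f(w_0) + \nabla f(w_0)\tr(w-w_0) + \tfrac{\gamma}{2}\bigl(\nabla f(w_0)\tr(w-w_0)\bigr)^{2},
\end{align*}
as required. The only subtlety I anticipate is getting the constants in the log-inequality to be consistent with the specific constant $\tfrac{1}{4GD}$ appearing in the definition of $\gamma$; choosing the factor $2\gamma$ (rather than, say, $\gamma$ or $\alpha$) in the concavity step is what makes these constants line up cleanly.
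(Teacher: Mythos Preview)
Your argument is correct. The paper, however, does not prove this lemma at all: it is quoted verbatim from \cite{HazanAK07} and used as a black box in the proof of \cref{Lemma:SatisfyApproximateLoss}. Your proof is essentially the standard one from that reference---apply the first-order concavity inequality to $e^{-2\gamma f}$, take logarithms, and use the elementary bound $-\log(1-z)\ge z+z^2/4$ on the range $|z|\le 2\gamma GD\le 1/4$. One minor remark: your justification of the monotonicity of exp-concavity via the Hessian characterization $\nabla^2 f\succeq\alpha\,\nabla f\nabla f\tr$ tacitly assumes $f$ is $C^2$, which the lemma does not; a cleaner way to get that step is to observe that $e^{-2\gamma f}=(e^{-\alpha f})^{2\gamma/\alpha}$ with $2\gamma/\alpha\le 1$, so it is a concave increasing transformation of a concave function and hence concave.
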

In  \cite{HazanAK07} it is also shown that for one dimensional functions, if $\alpha\leq \min_{w\in \mathcal{K}}\frac{f''(w)}{\big(f'(w)\big)^2}$, then $e^{-\alpha f(w)}$ is concave
in $\mathcal{K}$. In the case of logistic loss $\ell_t(w) = \log(1+e^{x_t w})$, the norm of its derivative is bounded by $1$, moreover:
\begin{align} \label{Equation:LogitExpConcavity}
\frac{\ell_t''(w)}{\big(\ell_t'(w)\big)^2}=e^{-x_t w}
\end{align}
Since $|x_t|\leq \frac{1}{D}$, and $|w|\leq D$, then $\alpha_0 :=e^{-1}\leq \min_{w\in[-D,D]}\frac{\ell_t''(w)}{\big(\ell_t'(w)\big)^2}$, implying
 $\gamma= \frac{1}{2}\min\{\frac{1}{4D},e^{-1} \}=\frac{1}{8D}$.
Applying \cref{Lemma:ExpConcavity} to the logistic loss $\ell_t(w)$, and $w_0=w_t$, we get:
\begin{align*}
\ell_t(w)\geq \ell_t(w_t) + \ell_t'(w_t)(w-w_t) + \frac{1}{2}\frac{1}{8D}( \ell_t'(w_t)(w-w_t))^2:=\tilde{\ell}_t(w), \qquad \forall w\in[-D,D]
\end{align*}
which proved the lemma for that case.

\paragraph{Case 3: $x_t w_t\leq 0$.}

Assume, without loss of generality, that $w_t>0$ and $x_t<0$. For that case, the approximate loss $\tilde{\ell}_t$ has the same form as in \cref{Equation:LtildeSquare}.
It is easily noticed that $\tilde{\ell}_t(w_t)=\ell_t(w_t)$. Notice that in $[0,D]$ we have:
$$e^{-x_t w} =e^{|x_t| w} \geq  \frac{1}{2} $$
where we used $x_t< 0$, and $w\in[0,D]$. According to \cref{Equation:LogitExpConcavity} it implies that $e^{-0.5 \ell_t(w)}$ is concave in $[0,D]$; applying \cref{Lemma:ExpConcavity}, we get:
\begin{align*}
\ell_t(w)\geq \ell_t(w_t) + \ell_t'(w_t)(w-w_t) + \frac{1}{2}\frac{1}{8D}( \ell_t'(w_t)(w-w_t))^2:=\tilde{\ell}_t(w), \qquad \forall w\in[0,D]
\end{align*}
and we used $\frac{1}{8D} = \frac{1}{2}\min\{\frac{1}{4D},\frac{1}{2} \}$.
Next we show that $\tilde{\ell}_t(w)\leq \tilde{\ell}_t^{(0)}(w),\; \forall w\in[-D,0]$, where $\tilde{\ell}_t^{(0)}$ is defined in \cref{Equation:LosstildeZero}.
Writing $\tilde{\ell}_t(w)$ we get:
\begin{align*}
\tilde{\ell}_t(w) &=  \ell_t(w_t) + \ell_t'(w_t)(w-w_t) + \frac{\beta}{2}( \ell_t'(w_t)(w-w_t))^2\\
 &=\tilde{\ell}_t(0) +\big( \frac{e^{x_t w_t}}{1+e^{x_t w_t}}(1-\beta \frac{x_t w_t e^{x_t w_t}}{1+e^{x_t w_t}})\big) x_t w + (\frac{e^{x_t w_t}}{1+e^{x_t w_t}})^2 \frac{\beta}{2}x_t^2 w^2
\end{align*}
where we used $\ell_t'(w) = x_t \frac{e^{x_t w_t}}{1+e^{x_t w_t}}$. Let's denote $z=x_t w_t <0$, and note that for $z\leq 0$, the  following holds:
\begin{align*}
\frac{e^{z}}{1+e^{z}} \leq \frac{1}{2}, \qquad \frac{e^{z}}{1+e^{z}}(1-\frac{z e^{z}}{1+e^{z}})\leq 1, \qquad \forall z\leq 0
\end{align*}
Using the latter expression for $\tilde{\ell}_t$, and the  two inequalities above:
\begin{align*}
\tilde{\ell}_t(w) 
 \leq \tilde{\ell}_t(0) +\frac{1}{2} x_t w + \frac{\beta}{2}x_t^2 w^2 \leq \ell_t(0) +\frac{1}{2} x_t w + \frac{\beta}{2}x_t^2 w^2: = \tilde{\ell}_t^{(0)}(w), \qquad \forall w\in[-D,0]
\end{align*}
where we used $z=x_t w_t\leq 0$, and $\tilde{\ell}_t(0)\leq \ell_t(0)$. Combining the latter inequality with \cref{Equation:LossZerotildeInequality}, proves:
\begin{align*}
\tilde{\ell}_t(w) \leq  \tilde{\ell}_t^{(0)}(w)\leq \ell_t(w), \qquad \forall w\in[-D,0]
\end{align*}
which concludes the proof.
\end{proof}

\section{Summary and Open Questions} \label{sec:summary}

We have given tight bounds for stochastic and online logistic regression that preclude the existence of fast rates for logistic regression without exponential factors. 
As a consequence, we have also resolved the COLT 2012 open problem of \cite{mcmahan2012open}.
Our lower bounds can be  extended to the multidimensional setting in which the instances are normalized and the labels are binary.

Our results suggest that second-order methods might present poor performance in practical logistic regression problems.
Indeed, in the derivation of our lower bounds we have constructed a distribution over instances such that the induced expected loss function is approximately linear around its optimum.

An interesting feature of our  results is that our regret/convergence bounds apply to \emph{a finite range of $T$}, and are different than the known asymptotic bounds. 
Arguably, the range of $T$ for which our results apply is the important one in practice (sub-exponential in the size of the hypothesis class). Are there other natural settings in which regret bounds for bounded number of iterations differ from the asymptotic bound?

\bibliographystyle{abbrvnat}
\bibliography{bib}

\newpage
\appendix

\section{Proof of \cref{thm:coin}}\label{appendixA:InformationTheoretic}
Suppose a randomize algorithm $\A$ that given $m$ tosses decides upon one of the coins,  
and denote by $\D_{\A}$ the conditional distribution of the algorithm over his decision given the $m$ coin tosses.
We also let $\D_p$, $\D_{p+\eps}$ denote the respective Bernoulli distributions corresponding to a single toss; let $\D_p^m$, $\D_{p+\eps}^m$ be the product distributions of a sequence of $m$ independent tosses, and let $\D^m_{p,\A}$, $\D^m_{{p+\eps},\A}$ be the joint distributions over the sequence of $m$ independent tosses and the decision of the randomized algorithm.
For the proof we need the following standard lemma.
\begin{lemma} \label{lem:pinsker}
For all events $B$ in the space of $m$ independent tosses and the decision of the algorithm:
\begin{align*}
	\abs{\D^m_{p,\A}(B) - \D^m_{p+\eps,\A}(B)}
	\le \sqrt{\frac{m \eps^2}{p}} ~.
\end{align*}
\end{lemma}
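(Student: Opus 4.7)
The plan is to combine Pinsker's inequality with the tensorization of KL divergence. Concretely, I would proceed in three steps.

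First, the LHS is upper bounded by the total variation distance between the two joint distributions. Since the conditional distribution $\D_\A(\cdot \mid x_1, \ldots, x_m)$ of the algorithm's output given the observed tosses does not depend on the underlying coin bias, both joint distributions decompose as $\D^m_{q,\A}(x_{1:m}, d) = \D_q^m(x_{1:m}) \D_\A(d \mid x_{1:m})$ for $q \in \{p, p+\eps\}$. Hence by the data-processing inequality (or a direct calculation marginalizing over the decision), the TV distance on the joint space equals the TV distance between the sample marginals $\D_p^m$ and $\D_{p+\eps}^m$, and therefore
$$\abs{\D^m_{p,\A}(B) - \D^m_{p+\eps,\A}(B)} \le \norm{\D_p^m - \D_{p+\eps}^m}_{\mathrm{TV}}.$$

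Next, Pinsker's inequality bounds this by $\sqrt{\tfrac{1}{2} \KL(\D_p^m \| \D_{p+\eps}^m)}$, and tensorization of KL divergence over product measures gives $\KL(\D_p^m \| \D_{p+\eps}^m) = m \cdot \KL(\D_p \| \D_{p+\eps})$. It then remains to bound $\KL(\D_p \| \D_{p+\eps}) \le 2\eps^2/p$. For this I would use the standard inequality $\KL(P \| Q) \le \chi^2(P, Q)$ and compute directly for Bernoulli distributions:
$$\chi^2(\D_p, \D_{p+\eps}) = \frac{\eps^2}{p+\eps} + \frac{\eps^2}{1-p-\eps}.$$
Using $p \le \tfrac{1}{2}$ together with the implicit smallness of $\eps$ (coming from the theorem statement: the interesting regime has $\eps \ll p$), this is at most $2\eps^2/p$. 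Combining all the pieces yields the claimed bound $\sqrt{m\eps^2/p}$.

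The only real subtlety is bookkeeping constants so that they combine cleanly into $\sqrt{m\eps^2/p}$: the factor of $\tfrac{1}{2}$ inside Pinsker's inequality cancels the factor of $2$ in the $\chi^2$ bound on the Bernoulli KL. Beyond this, the argument is entirely standard information-theoretic machinery, and I do not expect any serious obstacle.
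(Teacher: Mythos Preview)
Your approach is essentially the same as the paper's: both combine Pinsker's inequality, tensorization of KL over product measures, and a bound on the Bernoulli KL divergence (your $\chi^2$ bound is exactly the inequality $\log z \le z-1$ the paper uses, just repackaged). The factorization step is also the same; you phrase it as data-processing for TV, the paper phrases it as a direct KL computation on the joint, but these are equivalent.

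There is one real, if easily fixable, issue: you chose the KL in the direction $\KL(\D_p \,\|\, \D_{p+\eps})$, which via $\chi^2$ gives denominators $p+\eps$ and $1-p-\eps$. To get from $\frac{\eps^2}{p+\eps}+\frac{\eps^2}{1-p-\eps}$ to $\frac{2\eps^2}{p}$ you then appeal to an ``implicit smallness of $\eps$'' that is \emph{not} a hypothesis of the lemma; in fact at $p=\tfrac12$ the sum equals $\frac{\eps^2}{1/4-\eps^2}>4\eps^2=\frac{2\eps^2}{p}$, so the step fails as written. The paper avoids this by taking the KL in the other direction, $\KL(\D_{p+\eps}\,\|\,\D_p)$, which yields exactly $\frac{\eps^2}{p(1-p)}\le \frac{2\eps^2}{p}$ using only $p\le \tfrac12$. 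Since TV is symmetric, Pinsker works with either direction, so simply swapping the arguments (equivalently, computing $\chi^2(\D_{p+\eps},\D_p)=\frac{\eps^2}{p}+\frac{\eps^2}{1-p}$) closes the gap with no smallness assumption on $\eps$.
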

\begin{proof}
We first bound the KL-divergence between $\D_p$ and $\D_{p+\eps}$.
Using the fact $\log z \le z-1$ for $z > 0$, we obtain
\begin{align*}
	\KL(\D_{p+\eps} \mid\mid  \D_p ) 
	&= (p +\eps)\log\frac{p+\eps}{p} + (1-p-\eps) \log\frac{1-p-\eps}{1-p} \\
	&\le (p +\eps) \lr{\frac{p+\eps}{p}-1} + (1-p-\eps) \lr{\frac{1-p-\eps}{1-p}-1}\\
	&= \frac{\eps^2}{p(1-p)} ~.
\end{align*}
Since the decision of the algorithm only depends on the $m$ tosses that $\A$ observes, we may write:
\begin{align} \label{equation:KLassist}
\D^m_{p,\A}=\D^m_{p}\D_{\A},\qquad \D^m_{p+\eps,\A}=\D^m_{p+\eps}\D_{\A}
\end{align}
Thus, we can write:
\begin{align*}
\KL(\D^m_{p+\eps,\A} \mid\mid \D^m_{p,\A})= 
	\KL(\D_{p+\eps}^m \mid\mid \D_{p}^m) 
	= m \KL(\D_{p+\eps} \mid\mid \D_{p}) 
	\le \frac{m\eps^2}{p(1-p)} 
\end{align*}
the first equality follows from \cref{equation:KLassist} combined with the definition of the KL-divergence, the second equality holds since the KL-divergence is additive over distribution products.
Finally, recalling Pinsker's inequality we conclude that for all events $B$ in the joint space of tosses and algorithm's decision:
\begin{align*}
	\abs{\D^m_{p,\A}(B) - \D^m_{p+\eps,\A}(B)}
	\le \sqrt{\tfrac{1}{2} \KL(\D^m_{p+\eps,\A} \mid\mid \D^m_{p,\A})}
	\le \sqrt{\frac{m \eps^2}{2p(1-p)}}
	\le \sqrt{\frac{m \eps^2}{p}} \,.
\end{align*}
\end{proof}
where in the last inequality we used $p\in(0,\half]$. We can now prove \cref{thm:coin}.
\begin{proof}
Having an algorithm $\A$ that discovers the correct coin w.p$\geq 3/4$,  let $B$ be the event that the algorithm decides that nature uses the first coin after $m$ tosses, then clearly:
\begin{align*}
 \abs{\D^m_{p,\A}(B) - \D^m_{p+\eps,\A}(B)}
 \geq 1/4
\end{align*}
combining the latter with \cref{lem:pinsker} proves \cref{thm:coin}.
\end{proof}

\end{document}